\newcommand{\figstart}{The formatting and technical details are as described in Section \ref{sec:figure_description}. }
\definecolor{darkgreen}{rgb}{0.0, 0.2, 0.13}
\definecolor{darkblue}{rgb}{0.0, 0.0, 0.55}
\definecolor{darkred}{rgb}{0.55, 0.0, 0.0}
\definecolor{tableau_blue}{HTML}{1F77B4}
\definecolor{tableau_orange}{HTML}{FF7F0E}
\definecolor{tableau_green}{HTML}{2CA02C}
\definecolor{tableau_red}{HTML}{D62728}
\definecolor{tableau5}{HTML}{9467BD}
\definecolor{tableau6}{HTML}{8C564B}
\definecolor{tableau7}{HTML}{CFECF9}
\definecolor{tableau8}{HTML}{7F7F7F}
\definecolor{tableau9}{HTML}{BCBD22}
\definecolor{tableau10}{HTML}{17BECF}
\theoremstyle{remark}
\newtheorem{theorem}{Theorem}
\newtheorem{proposition}[theorem]{Proposition}
\newtheorem{definition}{Definition}
\DeclareMathOperator*{\E}{\mathbb{E}}
\DeclareMathOperator*{\Ave}{Ave}
\DeclareMathOperator*{\argmax}{arg\,max}
\DeclareMathOperator*{\argmin}{arg\,min}
\definecolor{cyan}{RGB}{23,190,207}
\definecolor{blue}{RGB}{31,119,180}
\definecolor{orange}{RGB}{255,127,14}
\definecolor{red}{RGB}{214,39,40}
\newcommand{\ETFfig}[2]{{#1}} 
\newcommand{\zr}{\mathbf{0}}
\newcommand{\R}{\mathbb{R}}
\newcommand{\one}{{\mathbbm{1}}}
\newcommand{\M}{\boldsymbol{M}}
\newcommand{\Mc}{\dot{\boldsymbol{M}}}
\newcommand{\Mstar}{{\M}^\star}
\newcommand{\U}{\boldsymbol{U}}
\newcommand{\A}{\boldsymbol{A}}
\renewcommand{\S}{\boldsymbol{S}}
\newcommand{\V}{\boldsymbol{V}}
\newcommand{\St}{\boldsymbol{\Sigma}_T}
\newcommand{\Sb}{\boldsymbol{\Sigma}_B}
\newcommand{\Sw}{\boldsymbol{\Sigma}_W}
\newcommand{\bmu}{\boldsymbol{\mu}}
\newcommand{\btheta}{\boldsymbol{\theta}}
\newcommand{\x}{{\boldsymbol{x}}}
\newcommand{\h}{\boldsymbol{h}}
\newcommand{\e}{\boldsymbol{e}}
\newcommand{\I}{\boldsymbol{I}}
\renewcommand{\r}{{\boldsymbol{r}}}
\newcommand{\y}{\boldsymbol{y}}
\renewcommand{\L}{\mathcal{L}}
\newcommand{\W}{\boldsymbol{W}}
\renewcommand{\H}{\boldsymbol{H}}
\newcommand{\cH}{\mathcal{H}}
\renewcommand{\v}{\boldsymbol{v}}
\newcommand{\w}{\boldsymbol{w}}
\newcommand{\z}{\boldsymbol{z}}
\renewcommand{\b}{\boldsymbol{b}}
\newcommand{\T}{\top}
\newcommand{\blue}[1]{\textbf{\color{tableau_blue}\textbf{{#1}}}}
\newcommand{\red}[1]{\textbf{\color{tableau_red}{#1}}}
\newcommand{\orange}[1]{\textbf{\color{tableau_orange}\textbf{{#1}}}}
\newcommand{\ANC}[1]{$\overrightarrow{(\text{NC{#1})}}$}
\renewcommand{\eqref}[1]{[\ref{#1}]}
\newcommand{\revision}[1]{\hl{#1}}
\renewcommand{\revision}[1]{{#1}} 
\title{Prevalence of Neural Collapse during the terminal phase of deep learning training}
\author[a,1]{Vardan Papyan} 
\author[b,1]{X.Y. Han}
\author[a,2]{David L. Donoho}
\affil[a]{Department of Statistics, Stanford University}
\affil[b]{School of Operations Research and Information Engineering, Cornell University}
\keywords{Machine learning $|$ Deep learning $|$ Adversarial robustness $|$ Simplex Equiangular Tight Frame $|$ Nearest Class Center $|$ Inductive bias} 
\begin{abstract}
Modern practice for training classification deepnets involves a \textit{Terminal Phase of Training} (TPT), which begins at the epoch where training error first vanishes; During TPT, the training error stays effectively zero while training loss is pushed towards zero.
Direct measurements of TPT, for three prototypical deepnet architectures and across seven canonical classification datasets, expose a pervasive     inductive bias we call \textit{Neural Collapse}, involving four deeply interconnected phenomena:  (NC1) Cross-example within-class variability of last-layer training activations collapses to zero, as the individual activations themselves collapse to their class-means; (NC2) The class-means collapse to the vertices of a Simplex Equiangular Tight Frame (ETF); (NC3) Up to rescaling, the last-layer classifiers collapse to the class-means, or in other words to the Simplex ETF, i.e. to a \textit{self-dual} configuration; (NC4) For a given activation, the classifier's decision collapses to simply choosing whichever class has the closest train class-mean, i.e. the Nearest Class-Center (NCC) decision rule.
The symmetric and very simple geometry induced by the TPT confers important benefits, including better generalization performance, better robustness, and better interpretability.
\end{abstract}
\begin{document}

\maketitle
\thispagestyle{firststyle}
\ifthenelse{\boolean{shortarticle}}{\ifthenelse{\boolean{singlecolumn}}{\abscontentformatted}{\abscontent}}{}


\section{Introduction}\label{sec:introduction}
Over the last decade, deep learning systems have steadily advanced the state-of-the-art in benchmark competitions, culminating in super-human performance in tasks ranging from image classification to language translation to game play. One might expect the trained networks to exhibit many particularities--making it impossible to find any empirical regularities across a wide range of datasets and architectures. On the contrary, in this article we present extensive measurements across image-classification datasets and architectures, exposing a common empirical pattern. 

Our observations focus on today's standard training paradigm in deep learning, an accretion of several fundamental ingredients that developed over time: Networks are trained beyond zero misclassification error, approaching negligible cross-entropy loss, \textit{interpolating} the in-sample training data; networks are \textit{overparameterized}, making such memorization possible; and these parameters are layered in ever-growing \textit{depth}, allowing for sophisticated feature engineering. A series of recent works \cite{ma2018power,belkin2019does,belkin2018overfitting,belkin2019reconciling,belkin2019talk} highlighted the paradigmatic nature of the practice of training well beyond zero-{\it error}, seeking zero-{\it loss}. We call the post-zero-error phase the \textbf{Terminal Phase of Training (TPT)}.

A scientist with standard preparation in mathematical statistics might anticipate that the linear classifier resulting from this paradigm, being a by-product of such training, would be quite arbitrary and vary wildly--from instance to instance, dataset to dataset, and architecture to architecture--thereby displaying no underlying cross-situational invariant structure. The scientist might further expect that the configuration of the fully-trained decision boundaries -- and the underlying linear classifier defining those boundaries -- would be quite arbitrary and vary chaotically from situation to situation. Such expectations might be supported by appealing to the overparameterized nature of the model, and to standard arguments whereby any noise in the data propagates during overparameterized training to generate disproportionate changes in the parameters being fit. 

Defeating such expectations, we show here that TPT frequently induces an underlying mathematical simplicity to the trained deepnet model -- and specifically to  the classifier and last-layer activations -- across many situations now considered canonical in deep learning. Moreover, the identified structure naturally suggests performance benefits. And indeed, we show that convergence to this rigid structure  tends to occur simultaneously with improvements in the  network’s generalization performance as well as adversarial robustness.

We call this process \textbf{Neural Collapse}, and characterize it by four manifestations in the classifier and last-layer activations:
\begin{description}
    \item[(NC1) Variability collapse:] As training progresses, the within-class variation of the activations becomes negligible as these activations collapse to their class-means.
    \item[(NC2) Convergence to Simplex ETF:] The vectors of the class-means (after centering by their global-mean) converge to having equal length, forming equal-sized angles between any given pair, and being the maximally pairwise-distanced configuration constrained to the previous two properties. This configuration is identical to a previously studied configuration in the mathematical sciences known as Simplex \textbf{Equiangular Tight Frame (ETF)} \cite{strohmer2003grassmannian}. See Definition \ref{def:etf}.
    \item[(NC3) Convergence to self-duality:] The class-means and linear classifiers -- although mathematically quite different objects, living in dual vector spaces -- converge to each other, up to rescaling. \revision{Combined with (NC2), this implies a \textit{complete} \textit{symmetry} in the network classifiers' decisions: each iso-classifier-decision region is isometric to any other such region by rigid Euclidean motion; moreover the class-means are each centrally located within their own specific regions, so there is no tendency towards higher confusion between any two classes than any other two.}
    \item[(NC4) Simplification to Nearest Class-Center (NCC):] For a given deepnet activation, the network classifier converges to choosing whichever class has the nearest train class-mean (in standard Euclidean distance).
\end{description}
We give a visualization of the phenomena (NC1)-(NC3) in Figure \ref{fig:etf}\footnote{Figure \ref{fig:etf} is, in fact, generated using \textit{real measurements}, collected while training the VGG13 deepnet on CIFAR10: For three randomly selected classes, we extract the linear classifiers, class-means, and a subsample of twenty last-layer features \ETFfig{at epochs 2, 16, 65, and 350}{}. These entities are then rotated, rescaled, and represented in three-dimensions by leveraging the singular-value decomposition of the class-means. We omit further details as Figure \ref{fig:etf} serves only to illustrate Neural Collapse on an abstract level.}, and define Simplex ETFs (NC2) more formally as follows:
\begin{definition}[\textbf{Simplex ETF}]\label{def:etf}
A \textit{standard} Simplex ETF is a collection of points in $\R^C$ specified by the columns of 
\begin{equation}\label{eq:etf_def}
    \M^\star = \sqrt{\frac{C}{C-1}} \left( \I - \frac{1}{C} \one\one^\T \right),
\end{equation}
where $\I \in \R^{C \times C}$ is the identity matrix, and $\one_C \in \R^C$ is the ones vector. In this paper, we allow other poses, as well as rescaling, so the \textit{general} Simplex ETF consists of the points specified by the columns of $\M = \alpha \U \M^\star \in \R^{p \times C}$, where $\alpha \in \R_{+}$ is a scale factor, and $\U \in \R^{p\times C}$ ($p \geq C$) is a partial orthogonal matrix ($\U^\T \U = \I$).
\end{definition}

\begin{figure}[tbhp]
\centering
\includegraphics[width=0.39\textwidth]{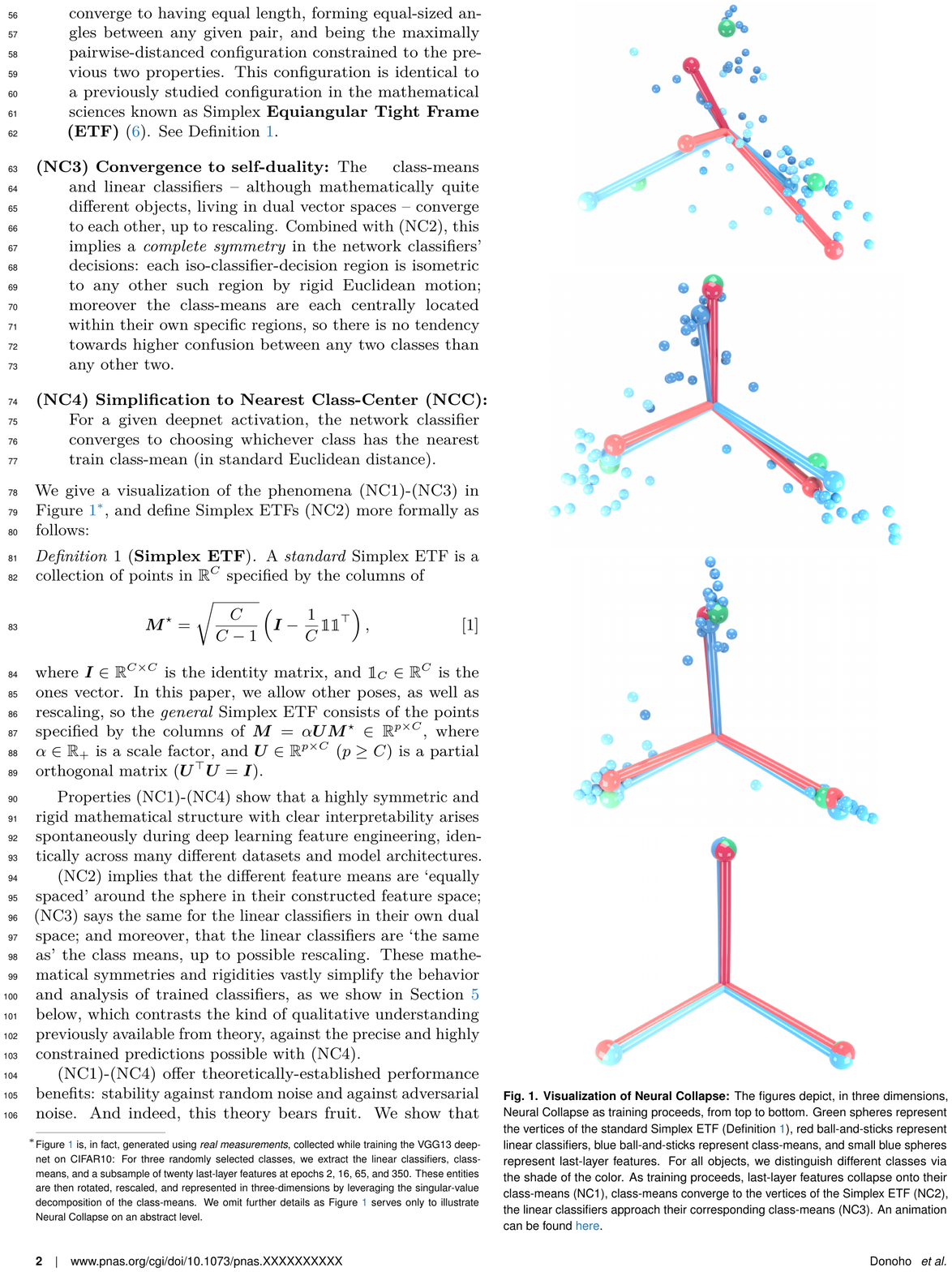}
\caption{\textbf{Visualization of Neural Collapse:} 
The figures depict, in three dimensions, Neural Collapse as training proceeds, from top to bottom. Green spheres represent the vertices of the standard Simplex ETF (Definition \ref{def:etf}), red ball-and-sticks represent linear classifiers, blue ball-and-sticks represent class-means, and small blue spheres represent last-layer features. For all objects, we distinguish different classes via the shade of the color. As training proceeds, last-layer features collapse onto their class-means (NC1), class-means converge to the vertices of the Simplex ETF (NC2), the linear classifiers approach their corresponding class-means (NC3). An animation can be found \href{https://purl.stanford.edu/br193mh4244}{here}. \label{fig:etf}}
\end{figure}

\revision{
Properties (NC1)-(NC4) show that a highly symmetric and rigid mathematical structure with clear interpretability arises spontaneously during deep learning feature engineering, identically across many different datasets and model architectures.}

\revision{(NC2) implies that the different feature means are `equally spaced' around the sphere in their constructed feature space; (NC3) says the same for the linear classifiers in their own dual space; and moreover, that the linear classifiers are `the same as' the class means, up to possible rescaling.  These mathematical symmetries and rigidities vastly simplify the behavior and analysis of trained classifiers, as we show in Section} \ref{sec:implications} \revision{below, which contrasts the kind of qualitative understanding previously available from theory, against the precise and highly constrained predictions possible with (NC4).

(NC1)-(NC4) offer theoretically-established performance benefits: stability against random noise and against adversarial noise. And indeed, this theory bears fruit. We show that during TPT, while Neural Collapse is progressing, the trained models are improving in generalizability and in adversarial robustness.}

\revision{In Section} \ref{sec:related_works} \revision{below we discuss the broader significance of (NC1)-(NC4) and their relation to recent advances across several rapidly developing  `research fronts.'}

\revision{To support our conclusions, we conduct} empirical studies that range over seven canonical classification datasets \cite{krizhevsky2009learning,lecun2010mnist,xiao2017fashion}, including ImageNet \cite{deng2009imagenet}, and three prototypical, contest-winning architectures \cite{he2016deep,huang2017densely,simonyan2014very}. These datasets and networks were chosen for their prevalence in the literature as benchmarks \cite{benchmarkAI,stanfordDAWN,rodrigob}, reaffirmed by their easy availability as part of the popular deep learning framework PyTorch \cite{paszke2019pytorch}. As explained below, these observations have important implications  for our understanding of numerous theoretical and empirical observations in deep learning.

\section{Setting and methodology}\label{sec:exper_info}
All subsequent experiments are built upon the general setting and methodology described below.

\subsection{Image classification}
In the image classification problem, we are given a training dataset of $d$-dimensional images, the goal is to train a predictor to identify the class -- out of $C$ total classes -- to which any input image $\x \in \R^d$ belongs.

\subsection{Deep learning for classification}
In this work, we consider the predictor to be a deep neural network, which typically consists of numerous layers followed by a linear classifier. We view the layers before the classifier as computing a function, $\x \to \h(\x)$, where $\h:\R^d \to \R^p$ outputs a $p$-dimensional feature vector. We refer to $\h(\x)$ as the \textit{last-layer activations} or \textit{last-layer features}.   The linear classifier takes as inputs the last-layer activations and outputs the class label.  In detail, the linear classifier is specified by weights $\W \in \R^{C \times p}$ and biases $\b \in \R^{C}$, and the label the network attaches to image $\x$ is a simple function of $\W \h(\x)+\b$. In fact, it is
$\arg\max_{c'} \left< \w_{c'}, \h \right> + b_{c'}$ i.e. the label is the index of the largest element in the vector $\W \h(\x)+\b$.

\subsection{Network architecture and feature engineering}
The network is generally specified in two stages: First, an architecture is prescribed; and then, for a given architecture, there are a large number of parameters which determine the deep network's feature engineering, $\h(\x)$. Collecting these parameters in a vector $\btheta$, we may also write $\h = \h_{\btheta}(\x)$.

When the architecture specifies a truly deep network -- and not merely a shallow one -- the variety of behaviors that the different choices of $\btheta$ can produce is quite broad. To evoke, in quite concrete terms, the process of specifying the nonlinear transformation $\x \mapsto \h_{\btheta}(\x)$, we speak of {\it  feature engineering}.  In contrast, traditional machine learning often dealt with a fixed collection of feature vectors that were not data-adaptive.

\subsection{Training}

Viewing the induced class labels as the network outputs, and the architecture and problem size as fixed in advance, the underlying class labelling algorithm depends on the parameter vector $(\btheta,\W,\b)$. We think of $\btheta$ as determining the features to be used, and $(\W,\b)$ as determining the linear classifier that operates on the features to produce the labels. The number of parameters that must be determined is quite large. In practice, these parameters must be learned from data, by the process commonly known as  training.  

More concretely, consider a balanced dataset, having exactly $N$ training examples in each class, $\bigcup_{c=1}^C \{ \x_{i,c} \}_{i=1}^N$, where $\x_{i,c}$ denotes the $i$-th example in the $c$-th class. The parameters $(\btheta, \W,\b)$ are fit by minimizing, usually using stochastic gradient descent (SGD), the objective function:
\begin{equation}\label{eq:train_loss}
    \min_{\btheta, \W, \b} \sum_{c=1}^C \sum_{i=1}^N \L \left(\W \h_{\btheta}(\x_{i,c}) + \b, \y_c \right).
\end{equation}
Above, we denote by $\L: \R^C{\times}\R^C \to \R^+$ the cross-entropy loss function and by $\y_c \in \R^C$ one-hot vectors, i.e, vectors containing one in the $c$-th entry and zero elsewhere. We refer to this quantity as the \textit{training loss} and the number of incorrect class predictions made by the network as the \textit{training error}. Notice that, in TPT, \textit{the loss is non-zero even if the classification error is zero}.

\subsection{Datasets}
We consider the MNIST, FashionMNIST, CIFAR10, CIFAR100, SVHN, STL10 and ImageNet datasets \cite{deng2009imagenet,krizhevsky2009learning,lecun2010mnist,xiao2017fashion}. MNIST was sub-sampled to $N{=}5000$ examples per class, SVHN to $N{=}4600$ examples per class, and ImageNet to $N{=}600$ examples per class. The remaining datasets are already balanced. The images were pre-processed, pixel-wise, by subtracting the mean and dividing by the standard deviation. No data augmentation was used.

\subsection{Networks}
We train the VGG, ResNet, and DenseNet architectures \cite{he2016deep,huang2017densely,simonyan2014very}.  For each of the three architecture types, we chose the network depth   through trial-and-error in a series of preparatory experiments in order to adapt to the varying difficulties of the datasets. The final chosen networks were VGG19, ResNet152, and DenseNet201 for ImageNet; VGG13, ResNet50, and DenseNet250 for STL10; VGG13, ResNet50, and DenseNet250 for CIFAR100; VGG13, ResNet18, and DenseNet40 for CIFAR10; VGG11, ResNet18, and DenseNet250 for FashionMNIST; and VGG11, ResNet18, and DenseNet40 for MNIST and SVHN. DenseNet201 and DenseNet250 were trained using the memory-efficient implementation proposed in \cite{pleiss2017memory}. We replaced the dropout layers in VGG with batch normalization and set the dropout rate in DenseNet to zero.

\subsection{Optimization methodology}\label{sec:methodology} Following common practice, we minimize the cross-entropy loss using stochastic gradient descent (SGD) with momentum $0.9$. The weight decay is set to $1{\times}10^{-4}$ for ImageNet and $5{\times}10^{-4}$ for the other datasets. ImageNet is trained with a batch size of $256$, across 8 GPUs, and the other datasets are trained on a single GPU with a batch size of $128$. We train ImageNet for 300 epochs and the other datasets for $350$ epochs. The initial learning is annealed by a factor of $10$ at $1/2$ and $3/4$ for ImageNet; and  $1/3$ and $2/3$ for the other the datasets. We sweep over $10$ logarithmically-spaced learning rates for ImageNet between $0.01$ and $0.25$, and $25$ learning rates for the remaining datasets between $0.0001$ and $0.25$--picking the model resulting in the best test error in the last epoch.

\subsection{Large-scale experimentation}
The total number of models fully trained for this paper is tallied below:

\begin{align*}
     \text{ImageNet: }     & \text{1 dataset} \times \text{3 nets} \times \text{10 lrs} = \text{30 models}.\\
     \text{Remainder: } & \text{6 datasets} \times \text{3 nets} \times \text{25 lrs} = \text{450 models}.\\
     \text{Total: } & \text{480 models}.
\end{align*}
The massive computational experiments reported here were run painlessly using ClusterJob and ElastiCluster \citep{clusterjob,MMCEP17,Monajemi19} on the Stanford Sherlock HPC cluster and Google Compute Engine virtual machines.

\subsection{Moments of activations} During training, we snapshot the network parameters at certain epochs. For each snapshotted epoch, we pass the train images through the network, extract their last-layer activations (using PyTorch hooks \cite{pytorch:hooks}), and calculate these activations' first and second moment statistics.

For a given dataset-network combination, we calculate the train global-mean $\bmu_G \in \R^p$:

\begin{equation*}
    \bmu_G \triangleq \Ave_{i,c} \{ \h_{i,c} \},
\end{equation*}
and the train class-means $\bmu_c \in \R^p$:

\begin{equation*}
    \bmu_c \triangleq \Ave_{i} \{ \h_{i,c} \}, \quad c=1,\dots,C,
\end{equation*}
where $\Ave$ is the averaging operator.

Unless otherwise specified, for brevity, we refer in the text to the globally-centered class-means, $\{\bmu_c - \bmu_G\}_{c=1}^C$, as just \textit{class-means}, since the globally-centered class-means are of more interest.


Given the train class-means, we calculate the train total covariance $\St \in \R^{p \times p}$,

\begin{equation*}
    \St \triangleq \Ave_{i,c} \left\{ \left( \h_{i,c} - \bmu_G \right) \left( \h_{i,c} - \bmu_G \right)^\T \right\},
\end{equation*}
the between-class covariance, $\Sb \in \R^{p \times p}$,
\begin{equation}\label{eq:SB_def}
\begin{aligned}
    \Sb \triangleq & \Ave_{c} \{ (\bmu_c - \bmu_G) (\bmu_c - \bmu_G)^\T \},
\end{aligned}
\end{equation}
and the within-class covariance, $\Sw \in \R^{p \times p}$,

\begin{align}
    \Sw \triangleq & \Ave_{i,c} \{ (\h_{i,c} - \bmu_{c}) (\h_{i,c} - \bmu_{c})^\T \}.
\end{align}
Recall from multivariate statistics that:

\begin{equation*}
    \St = \Sb + \Sw.
\end{equation*}

\subsection{Formalization of Neural Collapse}

With the above notation, we now present a more mathematical description of Neural Collapse, where $\to$ indicates convergence as training progresses:

\begin{description}
    \item[(NC1) Variability collapse:] $\Sw \to \zr$
    \item[(NC2) Convergence to Simplex ETF:] 
        \begin{gather*}
            \left| \left\|\bmu_c - \bmu_G \right\|_2 - \left\|\bmu_{c'} - \bmu_G\right\|_2 \right| \to  0 \quad \forall \ c,c'\\
            \left \langle \tilde{\bmu}_c,\tilde{\bmu}_{c'} \right \rangle \to  \frac{C}{C-1}\delta_{c,c'} - \frac{1}{C-1} \quad \forall \ c,c'.
        \end{gather*}
    \item [(NC3) Convergence to self-duality:]
    \begin{equation}
        \left\| \frac{\W^\T}{\|\W\|_F} - \frac{\Mc}{\|\Mc\|_F}\right\|_F \to 0
    \end{equation}
    \item [(NC4): Simplification to NCC:] 
    \begin{equation*}
    \arg\max_{c'} \left< \w_{c'}, \h \right> + b_{c'} \to \argmin_{c'} \|\h - \bmu_{c'}\|_2
    \end{equation*}
\end{description}
where $\tilde{\bmu}_c = (\bmu_c - \bmu_G)/\|\bmu_c - \bmu_G\|_2$ are the renormalized the class-means, $\Mc = [\bmu_c - \bmu_G, c=1,\dots,C] \in \R^{p \times C}$ is the matrix obtained by stacking the class-means into the columns of a matrix, and $\delta_{c,c'}$ is the Kronecker delta symbol.

\section{Results}\label{sec:figure_description}
To document the observations we make in Section \ref{sec:introduction}, we provide a series of figures and tables below. We briefly list here our claims and identify the source of our evidence.
\begin{itemize}
    \item Means and classifiers become equinorm: Figure \ref{fig:equinorm_std}
    \item Means and classifiers become maximally equiangular: \\ Figures \ref{fig:equi_angle} and \ref{fig:max_angle}
    \item Means and classifiers become self-dual: Figure \ref{fig:w_minus_h}
    \item Train within-class covariance collapses: Figure \ref{fig:collapse}
    \item Classifier approaches nearest class-center: Figure \ref{fig:nn_match}
    \item TPT improves robustness: Figure \ref{fig:adversarial}
    \item TPT improves test-error: Table \ref{tab:generalization}
\end{itemize}

All figures in this article are formatted as follows: Each of the seven array columns is a canonical dataset for benchmarking classification performance -- ordered left to right roughly by ascending difficulty. Each of the three array rows is a prototypical deep classifying network. On the horizontal axis of each cell is the epoch of training.  For each dataset-network combination, the \red{red} vertical line marks the begining of the effective beginning of TPT, i.e, the epoch when the training accuracy reaches 99.6\% for ImageNet and 99.9\% for the remaining datasets; we do not use 100\% as it has been reported \cite{ekambaram2017finding,zhang2018improved,muller2019identifying} that several of these datasets contain inconsistencies and mislabels which sometimes prevent absolute memorization. Additionally, \orange{orange} lines denote measurements on the network \orange{classifier}, while \blue{blue} lines denote measurements on the activation \blue{class-means}.


\begin{figure*}[tbhp]
\centering
\includegraphics[width=\linewidth]{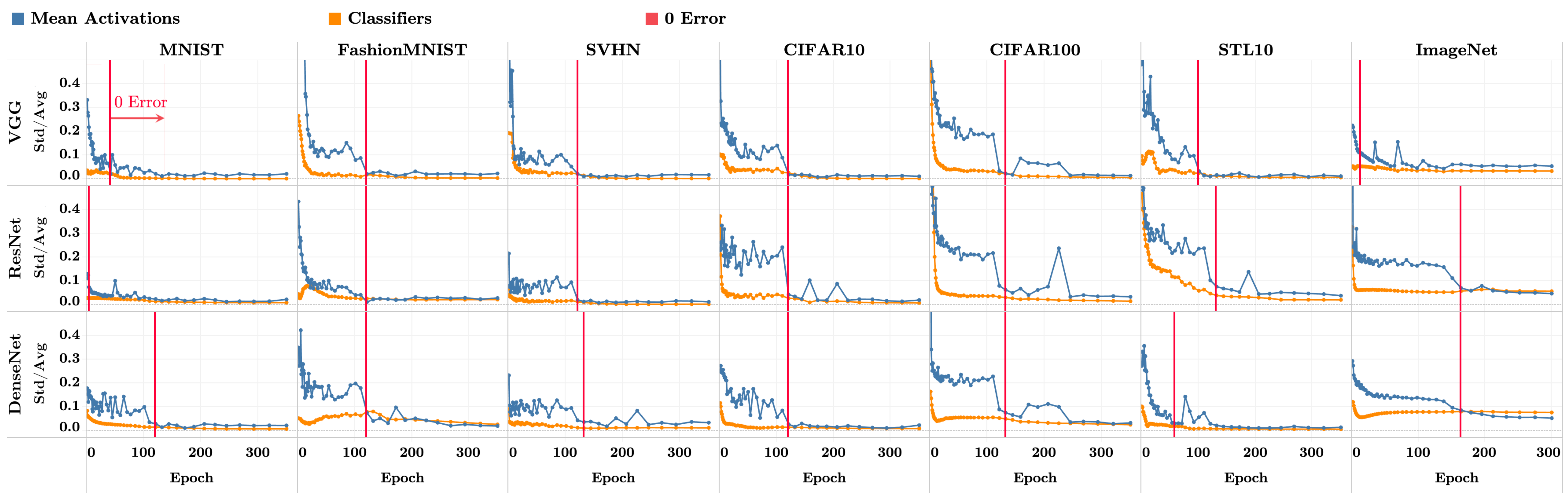}
\caption{\textbf{Train class-means become equinorm:} \figstart 
In each array cell, the vertical axis shows the coefficient of variation of the centered class-mean norms as well as the network classifiers norms. In particular, the \blue{blue} line shows $\text{Std}_c(\|\bmu_c - \bmu_G\|_2)/\text{Avg}_c(\|\bmu_c - \bmu_G\|_2)$ where $\{\bmu_c\}$ are the class-means of the last-layer activations of the training data and $\bmu_G$ is the corresponding train global-mean; the \orange{orange} line shows $\text{Std}_c(\|\w_c\|_2)/\text{Avg}_c(\|\w_c\|_2)$ where $\w_c$ is the last-layer classifier of the $c$-th class. 
As training progresses, the coefficients of variation of both class-means and classifiers decreases.
}
\label{fig:equinorm_std}
\end{figure*}

\begin{figure*}[tbhp]
\centering
\includegraphics[width=\linewidth]{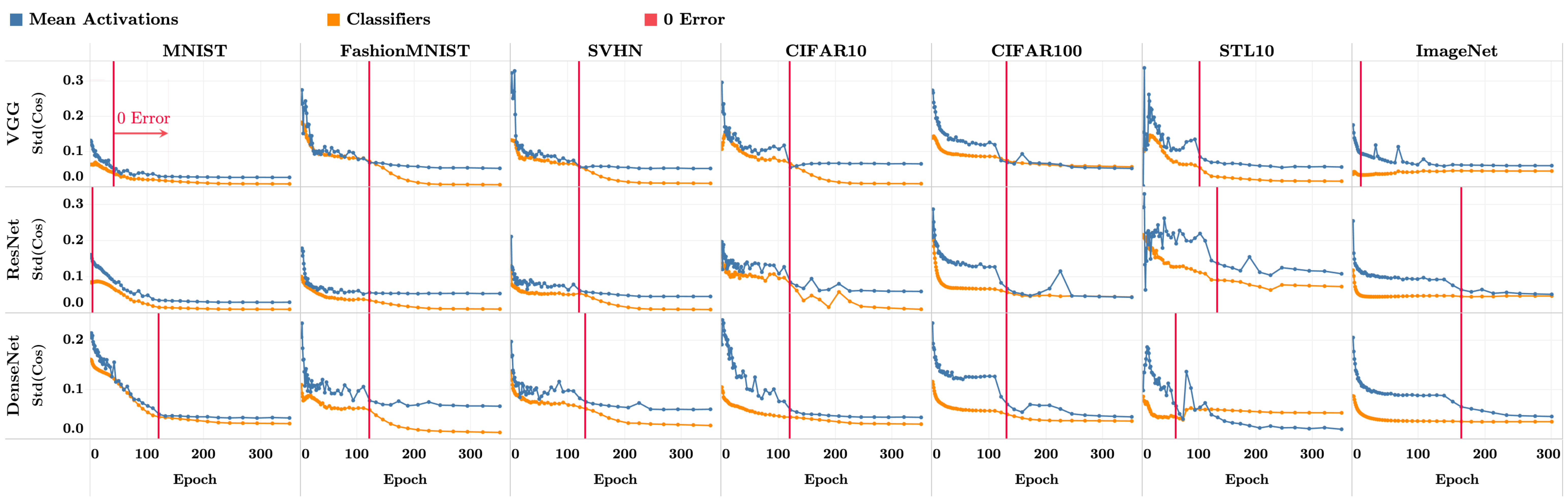}
\caption{\textbf{Classifiers and train class-means approach equiangularity:} \figstart
In each array cell, the vertical axis shows the standard deviation of the cosines between pairs of centered class-means and classifiers across all distinct pairs of classes $c$ and $c^\prime$. Mathematically, denote $\cos_{\bmu}(c,c^\prime)=\left\langle \bmu_{c}-\bmu_{G},\bmu_{c^\prime}-\bmu_{G}\right\rangle/( \|\bmu_{c}-\bmu_{G}\|_2 \|\bmu_{c^\prime}-\bmu_{G}\|_2$ and $\cos_{\w} (c,c^\prime)=\left\langle \w_c,\w_{c^\prime}\right\rangle/( \|\w_c\|_2 \|\w_{c^\prime}\|_2)$ where $\{\w_c\}_{c=1}^C$, $\{\bmu_c\}_{c=1}^C$, and $\bmu_G$ are as in Figure \ref{fig:equinorm_std}. We measure $\text{Std}_{c, c^\prime \neq c} (\cos_{\bmu} (c,c^\prime))$ (\blue{blue}) and  $\text{Std}_{c, c^\prime \neq c} (\cos_{\w} (c,c^\prime))$ (\orange{orange}). As training progresses, the standard deviations of the cosines approach zero indicating equiangularity.
}
\label{fig:equi_angle}
\end{figure*}

\begin{figure*}[tbhp]
\centering
\includegraphics[width=\linewidth]{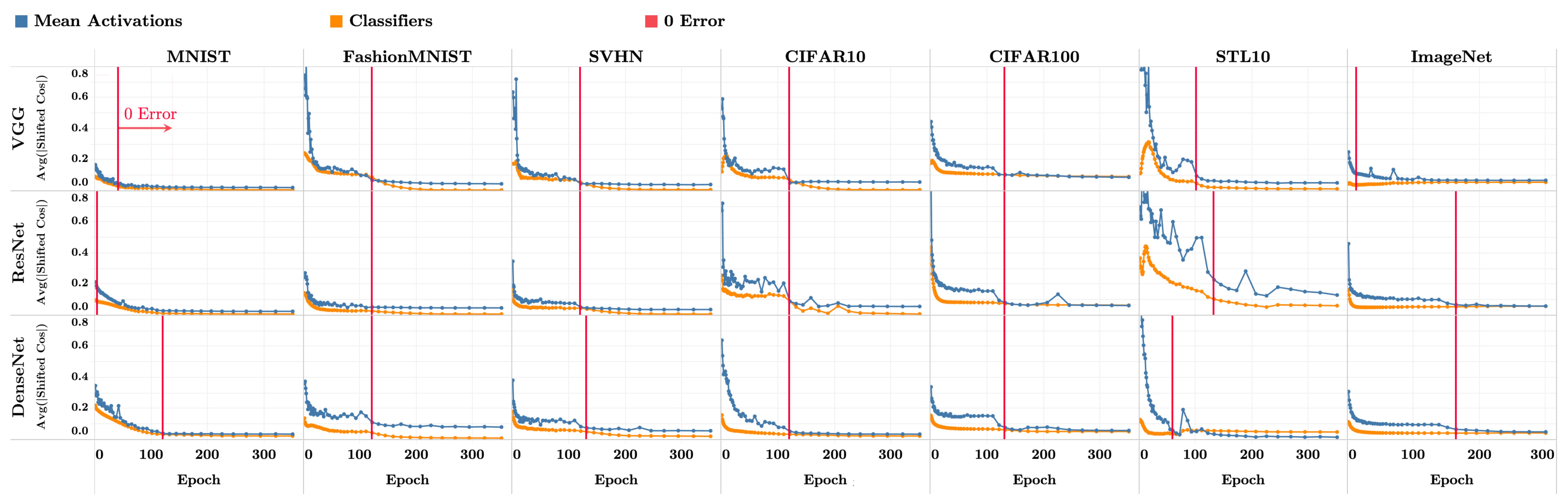}
\caption{\textbf{Classifiers and train class-means approach maximal-angle equiangularity:} \figstart 
We plot in the vertical axis of each cell the quantities $\text{Avg}_{c,c^\prime} |\cos_{\bmu}(c,c^\prime) + 1/(C-1)|$ (\blue{blue}) and $\text{Avg}_{c,c^\prime} |\cos_{\w}(c,c^\prime) + 1/(C-1)|$ (\orange{orange}), where $\cos_{\bmu}(c,c^\prime)$ and $\cos_{\w}(c,c^\prime)$ are as in Figure \ref{fig:equi_angle}. As training progresses, the convergence of these values to zero implies that all cosines converge to $-1/(C-1)$. This corresponds to the maximum separation possible for globally centered, equiangular vectors.
}
\label{fig:max_angle}
\end{figure*}

\begin{figure*}[tbhp]
\centering
\includegraphics[width=\linewidth]{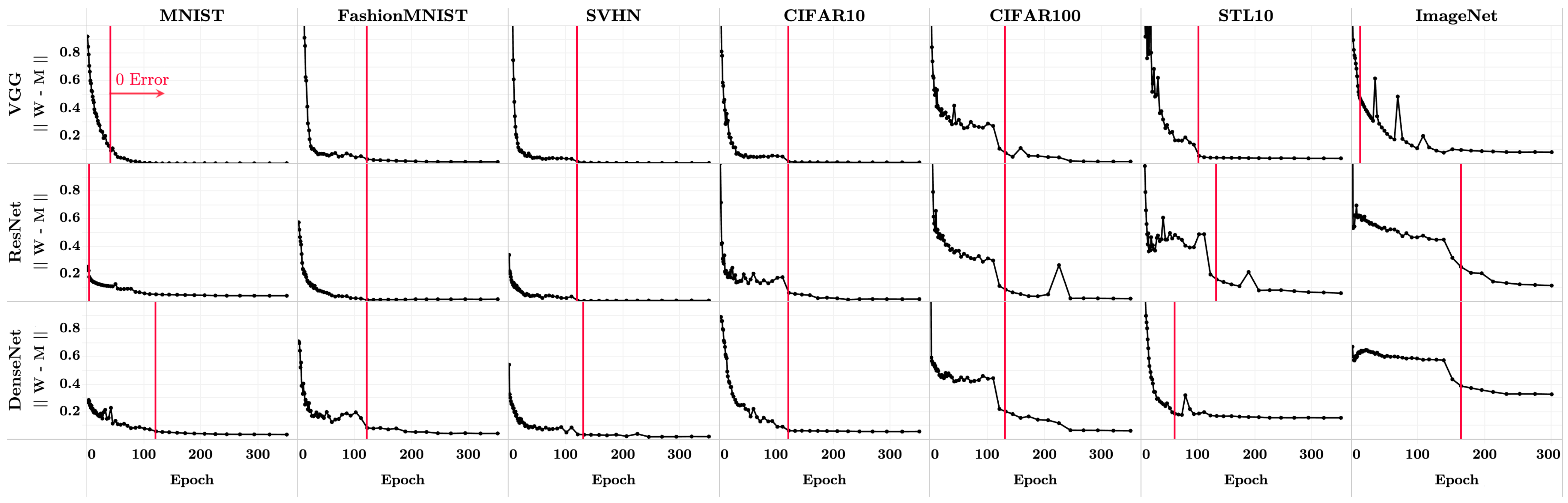}
\caption{\textbf{Classifier converges to train class-means:} \figstart 
In the vertical axis of each cell, we measure the distance between the classifiers and the centered class-means, both rescaled to unit-norm. Mathematically, denote $\widetilde{\M} = \Mc / \|\Mc\|_F$ where $\Mc = [\bmu_c - \bmu_G: c=1,\dots,C] \in \R^{p \times C}$  is the matrix whose columns consist of the centered train class-means; denote $\widetilde{\W} = \W / \|\W\|_F$ where $\W \in \R^{C \times p}$ is the last-layer classifier of the network. We plot the quantity $\|\widetilde{\W}^\T - \widetilde{\M}\|^2_F$ on the vertical axis. This value decreases as a function of training, indicating the network classifier and the centered-means matrices become proportional to each other (self-duality).
}
\label{fig:w_minus_h}
\end{figure*}

\begin{figure*}[tbhp]
\centering
\includegraphics[width=\linewidth]{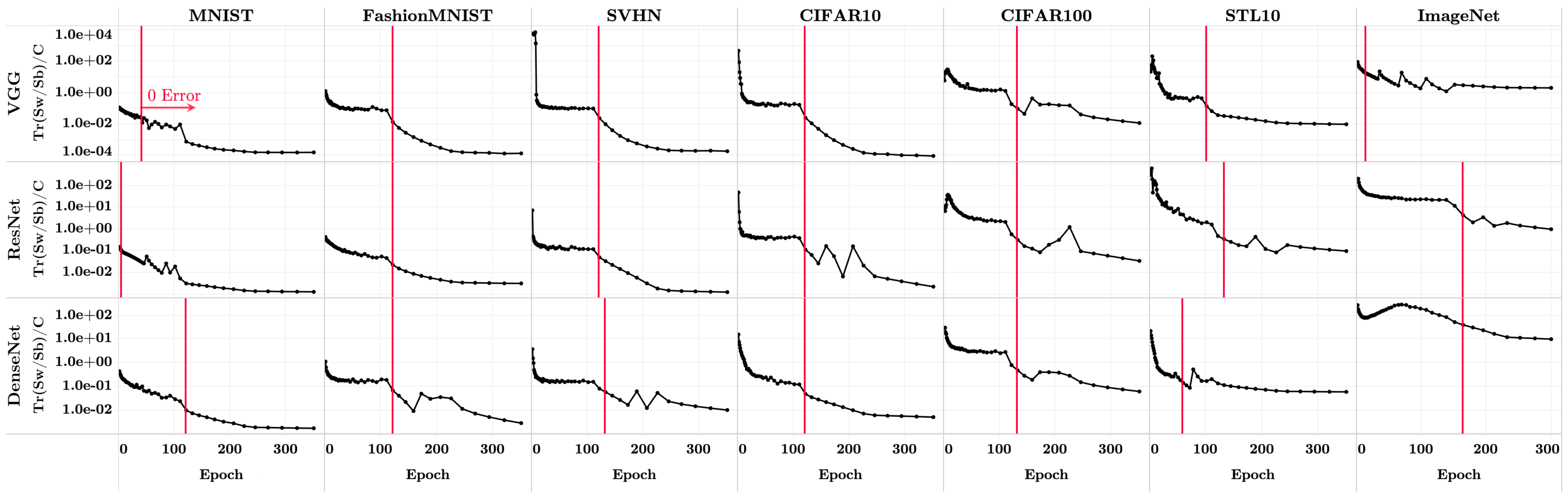}
\caption{\textbf{Training within-class variation collapses:} \figstart 
In each array cell, the vertical axis (log-scaled) shows the magnitude of the between-class covariance compared to the within-class covariance of the train activations . Mathematically, this is represented by $\Tr{\Sw\Sb^{\dagger}}/C$ where $\Tr{\cdot}$ is the trace operator, $\Sw$ is the within-class covariance of the last-layer activations of the training data, $\Sb$ is the corresponding between-class covariance, $C$ is the total number of classes, and $[\cdot]^{\dagger}$ is Moore-Penrose pseudoinverse. This value decreases as a function of training -- indicating collapse of within-class variation.
}
\label{fig:collapse}
\end{figure*}

\begin{figure*}[tbhp]
\centering
\includegraphics[width=\linewidth]{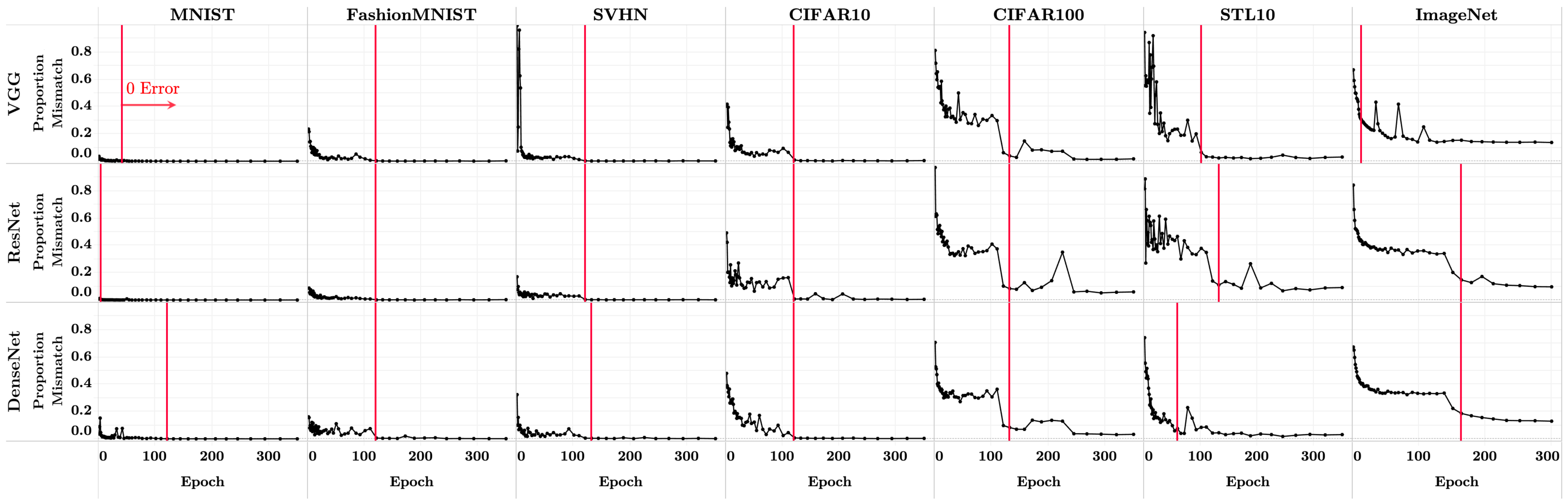}
\caption{\textbf{Classifier behavior approaches that of Nearest Class-Center:} \figstart 
In each array cell, we plot the proportion of examples (vertical axis) in the \textit{testing} set where network classifier disagrees with the result that would have been obtained by choosing  $\arg\min_c \|\h - \bmu_{c}\|_2$ where $\h$ is a last-layer test activation, and $\{\bmu_c\}_{c=1}^C$ are the class-means of the last-layer train activations. As training progresses, the disagreement tends to zero, showing the classifier's behavioral simplification to the nearest train class-mean decision rule.
}
\label{fig:nn_match}
\end{figure*}

\begin{figure*}[tbhp]
\centering
\includegraphics[width=\linewidth]{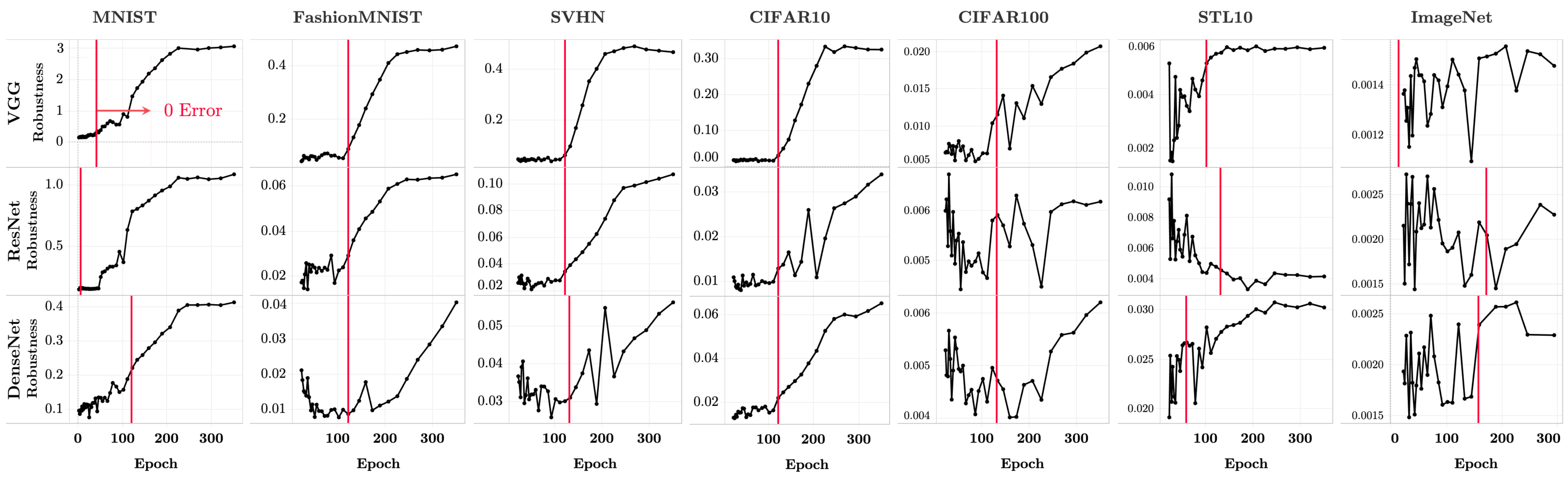}
\caption{\textbf{Training beyond zero-error improves adversarial robustness:} \figstart 
For each dataset and network, we sample without replacement $100$ test images--constructing for each an adversarial example using the DeepFool method proposed in \cite{moosavi2016deepfool}. In each array cell, we plot on the vertical axis the robustness measure, $\Ave_i \|\r(\x_i)\|_2 / \|\x_i\|_2$, from the same paper--where $\r(\x_i)$ is the minimal perturbation required to change the class predicted by the classifier, for a given input image $\x_i$. As training progresses, larger perturbations are required to fool the deepnet. Across all array cells, the median improvement in the robustness measure in the last epoch over the first epoch achieving zero training error is 0.0252; the mean improvement is 0.2452.
}
\label{fig:adversarial}
\end{figure*}

\newcolumntype{Z}{>{\centering\arraybackslash}m{1.8cm}}
\newcolumntype{L}{>{\centering\arraybackslash}m{1.8cm}}

\begin{table} [t] 
\centering
\caption{\textbf{Training beyond zero-error improves test performance.}
}\label{tab:generalization}
\begin{tabular}{llZL} 
\textbf{Dataset} & \textbf{Network} & \textbf{Test accuracy at zero error} & \textbf{Test accuracy at last epoch} \\ \toprule  & VGG & 99.40 & 99.56 \\ MNIST & ResNet & 99.32 & 99.71 \\  & DenseNet & 99.65 & 99.70 \\ \midrule  & VGG & 92.92 & 93.31 \\ FashionMNIST & ResNet & 93.29 & 93.64 \\  & DenseNet & 94.18 & 94.35 \\ \midrule  & VGG & 93.82 & 94.53 \\ SVHN & ResNet & 94.64 & 95.70 \\  & DenseNet & 95.87 & 95.93 \\ \midrule  & VGG & 87.85 & 88.65 \\ CIFAR10 & ResNet & 88.72 & 89.44 \\  & DenseNet & 91.14 & 91.19 \\ \midrule  & VGG & 63.03 & 63.85 \\ CIFAR100 & ResNet & 66.19 & 66.21 \\  & DenseNet & 77.19 & 76.56 \\ \midrule  & VGG & 65.15 & 68.00 \\ STL10 & ResNet & 69.99 & 70.24 \\  & DenseNet & 67.79 & 70.81 \\ \midrule  & VGG & 47.26 & 50.12 \\ ImageNet & ResNet & 65.41 & 64.45 \\  & DenseNet & 65.04 & 62.38 \\ \bottomrule\end{tabular}
\justify
\addtabletext{The median improvement of test accuracy at the last epoch over that at the first epoch achieving zero training error is 0.3495 percent; the mean improvement is 0.4984 percent.}
\end{table}

\section{Discussion} \label{sec:discussion}
Taken together, Figures \ref{fig:equinorm_std}-\ref{fig:nn_match} give evidence for Neural Collapse. First, Figure \ref{fig:equinorm_std} shows how, as training progresses, the variation in the norms of the class-means (and classifiers) decreases--indicating that the class-means (and classifiers) are converging to an equinormed state.

Then, Figure \ref{fig:equi_angle} indicates that all pairs of class-means (or classifiers) tend towards forming equal-sized angles. Figure \ref{fig:max_angle} additionally reveals that the cosines of these angles converge to $-\frac{1}{C-1}$ -- the maximum possible given the constraints. This maximal-equiangularity, combined with equinormness, implies that the class-means and classifiers converge to Simplex ETFs.

The above experiments by themselves do not indicate any relationship between the final converged states of the class-means and classifiers, even though both converge to some Simplex ETF. Such a relationship is revealed by Figure \ref{fig:w_minus_h} -- showing how they converge to the \textit{same} Simplex ETF, up to rescaling.

Moreover, the above concerns solely the class-means. \revision{Yet, we can make a stronger claim about the activations themselves by looking at $\tr{\Sw \Sb^{\dagger}}$. This quantity, canonical in multivariate statistics, measures the \textit{inverse} \textit{signal-to-noise} ratio for classification problems and can be used to predict misclassification} \cite{anderson1962introduction} \revision{. The intra-class covariance matrix $\Sw$ (the noise) is best interpreted once scaled and rotated by pseudo-inverse of the inter-class covariance matrix $\Sb$ (the signal), since such transformation maps the noise into a common frame of reference across epochs. According to Figure} \ref{fig:collapse}  \revision{, the normalized variation of the activations becomes negligible as training proceeds, indicating the activations collapse to their corresponding class means. This collapse continues well after the beginning of TPT.}

A recurring theme across Figures \ref{fig:equinorm_std}-\ref{fig:nn_match} is the continuing process of Neural Collapse after zero error has been achieved. This explains TPT's paradigmatic nature: While continuing training after zero error has already been achieved seems counter-intuitive, it induces significant changes in the underlying structure of the trained network.

This motivates Figure \ref{fig:adversarial} and Table \ref{tab:generalization}, which explore two of the benefits of TPT. Table \ref{tab:generalization} shows how the test accuracy continues improving steadily, and Figure \ref{fig:adversarial} shows how adversarial robustness continues improving as well. In fact, most of the improvement in robustness happens \textit{during} TPT.






\section{Neural Collapse sharpens previous insights}\label{sec:implications}

Two notable prior works \cite{webb1990optimised,soudry2018implicit} were able to significantly constrain the form of the structure of trained classifiers. However, in the presence of Neural Collapse, it is possible to say dramatically more about the structure of trained classifiers. Moreover, the structure which emerges is extremely simple and symmetric.

In particular, the prior work assumed \textit{fixed} features, {\it not subject to data-adaptive feature engineering} which is so characteristic of deep learning training. In the modern context where {\it deep-learning features are trained}, and employing the assumption that the resulting last-layer entities undergo Neural Collapse, the mathematical constraints on the structure of trained classifiers tighten drastically.

As a preliminary step, we first formalize four possible properties of the \textit{end-state} towards which Neural Collapse is tending:
\begin{description}
    \item[\ANC{1} Variability collapse:] $\Sw = \zr$
    \item[\ANC{2} Simplex ETF structure:] 
    \begin{align*}
        \|\bmu_c - \bmu_G\|_2 =& \|\bmu_{c'} -\bmu_G \|_2 \quad \forall \ c,c'\\
        \left \langle \tilde{\bmu}_c,\tilde{\bmu}_{c'} \right \rangle =&  \frac{C}{C-1}\delta_{c,c'} - \frac{1}{C-1}.
    \end{align*}
    \item [\ANC{3} Self-duality:]  $\frac{\W^\T}{\|\W\|_F} = \frac{\Mc}{\|\Mc\|_F}$
    \item [\ANC{4} Behavioral equivalence to NCC:] 
    \begin{equation*}
    \arg\max_{c'} \left< \w_{c'}, \h \right> + b_{c'} = \argmin_{c'} \|\h - \bmu_{c'}\|_2.
    \end{equation*}
\end{description}
where $\tilde{\bmu}_c = (\bmu_c - \bmu_G)/\|\bmu_c - \bmu_G\|_2$ are the renormalized and centered class-means, $\Mc = [\mu_c - \mu_G: c=1,\dots,C]\in \R^{p \times C}$ is the matrix obtained by stacking the centered class-means into the columns of a matrix, and $\delta_{c,c'}$ is the Kronecker delta symbol.

\subsection{Webb and Lowe (1990)}\label{sec:webb_and_lowe}
In \cite{webb1990optimised}, Webb and Lowe proved the following important result which, reformulated for the modern setting, could be written as follows.

\begin{proposition}[Section 3 in \cite{webb1990optimised}] \label{prop:web_low}
    Fix the deepnet architecture and the underlying tuning parameters $\btheta$, so that the activations $\h_{\btheta}(\x)$ involve no training, and so that only the classifier weights $\W$ and biases $\b$ need to be trained. Maintain the same definitions -- $\St$, $\bmu_c$, $\bmu_{G}$ etc. -- as in Section \ref{sec:exper_info}. Adopting the mean squared error loss in place of the cross-entropy loss, the optimal classifier weights and biases are given by
    
    \begin{align}\label{eq:lda}
        \W = & \frac{1}{C} \Mc^\T \St^\dagger \nonumber \\
        \b = & \frac{1}{C} \one_C - \frac{1}{C} \Mc^\T \St^\dagger \bmu_G,
    \end{align}
    where $^{\dagger}$ denotes the Moore-Penrose pseudoinverse, $\Mc = [\bmu_c - \bmu_G: c=1,\dots,C] \in \R^{p \times C}$ is the matrix obtained by stacking the centered class-means into the columns of a matrix, and $\one_C \in \R^C$ is a vector of ones.
\end{proposition}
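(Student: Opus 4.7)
The plan is to recognize this as a multivariate linear least-squares problem with one-hot targets and solve it via the normal equations. Because $\h_{\btheta}(\x)$ is held fixed, writing $\h_{i,c} = \h_{\btheta}(\x_{i,c})$, the objective
$$F(\W,\b) = \sum_{c=1}^C \sum_{i=1}^N \bigl\|\W\h_{i,c} + \b - \y_c\bigr\|_2^2$$
is a convex quadratic in $(\W,\b)$, so computing and manipulating its two first-order conditions suffices.

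First I would differentiate in $\b$. Using $\sum_{c,i}\y_c = N\one_C$ and $\sum_{c,i}\h_{i,c} = NC\bmu_G$, the condition $\partial_{\b}F = 0$ collapses to $\W\bmu_G + \b = \tfrac{1}{C}\one_C$, i.e.\ $\b = \tfrac{1}{C}\one_C - \W\bmu_G$. This already yields the claimed form of $\b$ once $\W$ has been identified. Next I would differentiate in $\W$, producing the normal equation
$$\W\sum_{c,i}\h_{i,c}\h_{i,c}^\T + \b\sum_{c,i}\h_{i,c}^\T = \sum_{c,i}\y_c\h_{i,c}^\T.$$
Substituting the formula for $\b$, applying the moment identity $\Ave_{c,i}\{\h_{i,c}\h_{i,c}^\T\} = \St + \bmu_G\bmu_G^\T$, and noting that the cross-term $\W\bmu_G\bmu_G^\T$ cancels, the left-hand side collapses to $NC\,\W\St + N\one_C\bmu_G^\T$. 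Because $\y_c$ is the $c$-th standard basis vector, the right-hand side equals $N\M^\T$ with $\M = [\bmu_c : c=1,\dots,C]$, and subtracting $N\one_C\bmu_G^\T$ from both sides identifies it as $N\Mc^\T$. The system therefore reduces to $C\W\St = \Mc^\T$, giving $\W = \tfrac{1}{C}\Mc^\T\St^\dagger$; substituting back reproduces the displayed formula for $\b$.

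The main obstacle — and the only nontrivial step — is justifying the pseudoinverse in lieu of an ordinary inverse, since in the overparameterized regime $p > NC$ the matrix $\St$ is singular and $C\W\St = \Mc^\T$ admits an affine family of solutions. I would resolve this by observing that each row of $\Mc^\T$ equals $\Ave_i\{(\h_{i,c} - \bmu_G)^\T\}$, which lies in the row-span of the centered features and hence in the range of $\St$; therefore $\W = \tfrac{1}{C}\Mc^\T\St^\dagger$ is a genuine solution, and in fact the minimum-Frobenius-norm one. Any alternative solution differs by a matrix whose rows lie in the null space of $\St$, and such a difference vanishes on every centered training feature and thus leaves in-sample predictions $\W\h_{i,c}+\b$ unchanged — so the pseudoinverse formula is the canonical closed-form minimizer, as claimed.
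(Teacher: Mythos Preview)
Your derivation is correct. The paper itself does not prove this proposition: it is quoted verbatim from Webb and Lowe (1990) and used as a black box to launch Theorem~\ref{thm:collapse_web_low}. So there is no ``paper's own proof'' to compare against here.

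Your argument is the standard normal-equations derivation for multi-output linear least squares with one-hot targets, and the bookkeeping (the cancellation of $\W\bmu_G\bmu_G^\T$, the identification $\M^\T - \one_C\bmu_G^\T = \Mc^\T$) is clean and correct. The one place where you go beyond the classical statement is your handling of the singular case: your observation that $\bmu_c-\bmu_G = \Ave_i\{\h_{i,c}-\bmu_G\}$ lies in $\operatorname{range}(\St)$, so that $\Mc^\T\St^\dagger\St=\Mc^\T$ and the pseudoinverse formula is a genuine (minimum-norm) solution, is exactly the right justification. Your final remark --- that any other minimizer differs by a component annihilating all centered training features and hence yields identical in-sample predictions --- correctly explains why the pseudoinverse representative is the canonical one to display.
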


The form in \eqref{eq:lda} is similar to the one first developed by R.A. Fisher in 1936 \cite{fisher1936use} -- commonly referred to as Linear Discriminant Analysis (LDA) -- although Fisher's version uses $\Sw$ in lieu of $\St$. In other words, the above theorem states that a modified LDA is the optimal solution for the last-layer classifier

Webb and Lowe's result admirably elucidates the structure of the optimal classifier; however, it also leaves a great deal unspecified about possible properties of the classifier.
In our Theorem \ref{thm:collapse_web_low}, immediately following, we supplement Webb and Lowe's assumptions by adding the variability collapse and Simplex ETF properties; the result significantly narrows the possible structure of optimal classifiers, obtaining both self duality and behavioral agreement with NCC.

\begin{theorem} [\bf Prop.~\ref{prop:web_low}+\ANC{1-2} imply \ANC{3-4}] \label{thm:collapse_web_low}
Adopt the framework and assumptions of Proposition \ref{prop:web_low}, as well as the end state implied by (NC1)-(NC2), i.e. \ANC{1}-\ANC{2}. The Webb-Lowe classifier \eqref{eq:lda}, in this setting, has the additional properties \ANC{3}-\ANC{4}.
\end{theorem}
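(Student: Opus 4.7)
The plan is to simplify the Webb--Lowe classifier \eqref{eq:lda} under the end-state conditions \ANC{1}--\ANC{2}, whereupon \ANC{3} and \ANC{4} essentially read off by inspection. The key observation is that \ANC{1}--\ANC{2} force the trained classifier to collapse to the Moore--Penrose pseudoinverse $\Mc^\dagger$, which in the ETF case is itself proportional to $\Mc^\T$.

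First I would use \ANC{1} to reduce $\St = \Sb + \Sw$ to $\St = \Sb = \frac{1}{C}\Mc\Mc^\T$ (the last equality being \eqref{eq:SB_def}). Substituting into $\W = \frac{1}{C}\Mc^\T\St^\dagger$ and invoking the always-valid SVD identity $A^\T(AA^\T)^\dagger = A^\dagger$, this collapses to $\W = \Mc^\dagger$. Now I would invoke \ANC{2}: Definition \ref{def:etf} writes $\Mc = \alpha\U\M^\star$ with $\U^\T\U = \I$, and a direct calculation shows $\M^\star(\M^\star)^\T = \frac{C}{C-1}(\I - \frac{1}{C}\one\one^\T)$, i.e.\ $\frac{C}{C-1}$ times a rank-$(C-1)$ centering projector. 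Hence $\Mc$ has exactly $C-1$ nonzero singular values, all equal to some $\sigma>0$. Writing a reduced SVD $\Mc = U\Sigma V^\T$ with $\Sigma = \sigma\I_{C-1}$ gives $\Mc^\dagger = V\Sigma^{-1}U^\T = \sigma^{-2}\Mc^\T$, so $\W^\T = \sigma^{-2}\Mc$. Since $\sigma^{-2}>0$, Frobenius-normalizing both sides yields \ANC{3}.

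For \ANC{4}, I would expand both decision rules into inner products. Writing $\bmu_{c'} = (\bmu_{c'}-\bmu_G) + \bmu_G$ and dropping terms independent of $c'$, the NCC rule becomes
\begin{equation*}
\argmin_{c'}\|\h-\bmu_{c'}\|_2 = \argmax_{c'}\left\{2\langle\h-\bmu_G,\bmu_{c'}-\bmu_G\rangle - \|\bmu_{c'}-\bmu_G\|_2^2\right\},
\end{equation*}
which, under the equinorm part of \ANC{2}, collapses to the argmax of $\Mc^\T(\h-\bmu_G)$. On the classifier side, the Webb--Lowe bias is engineered so that $\W\h + \b = \frac{1}{C}\Mc^\T\St^\dagger(\h-\bmu_G) + \frac{1}{C}\one_C$; and by the previous paragraph, $\frac{1}{C}\Mc^\T\St^\dagger = \Mc^\dagger = \sigma^{-2}\Mc^\T$. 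Hence the classifier's argmax also reduces to the argmax of $\Mc^\T(\h-\bmu_G)$, establishing \ANC{4}.

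The main obstacle is really just the singular-value computation for $\M^\star$: once one recognizes $\M^\star(\M^\star)^\T$ as a scalar multiple of the centering projector $\I - \frac{1}{C}\one\one^\T$, so that all $C-1$ nonzero singular values of $\Mc$ coincide, the proportionality $\Mc^\dagger \propto \Mc^\T$ delivers both \ANC{3} and the NCC-agreement in \ANC{4} by essentially the same one-line argument.
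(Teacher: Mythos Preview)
Your proposal is correct and follows essentially the same approach as the paper: reduce $\St$ to $\Sb=\frac{1}{C}\Mc\Mc^\T$ via \ANC{1}, collapse $\W$ to $\Mc^\dagger$, use the equal-singular-value structure from \ANC{2} to get $\Mc^\dagger\propto\Mc^\T$ (hence \ANC{3}), and then match the classifier and NCC rules via the inner-product form $\argmax_{c'}\langle\bmu_{c'}-\bmu_G,\h-\bmu_G\rangle$ using the equinorm property. The only cosmetic difference is that you derive the equal singular values explicitly from Definition~\ref{def:etf} and meet the two decision rules in the middle, whereas the paper asserts the singular-value fact directly from \ANC{2} and chains from the classifier to NCC; the content is the same.
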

\begin{proof}
By \ANC{1}, $\Sw = \zr$, and we have $\St = \Sb$. Using now Proposition \ref{prop:web_low}, we obtain

\begin{align*}
    \W = & \frac{1}{C} \Mc^\T \Sb^\dagger \nonumber \\
    \b = & \frac{1}{C} \one_C - \frac{1}{C} \Mc^\T \Sb^\dagger \bmu_G.
\end{align*}
\eqref{eq:SB_def} implies that $\Sb= \frac{1}{C} \Mc \Mc^\T$. Thus,

\begin{align*}
    \W = & \Mc^\T \left(\Mc \Mc^\T\right)^\dagger = \Mc^\dagger \nonumber  \\
    \b = & \frac{1}{C} \one_C - \Mc^\T \left(\Mc \Mc^\T\right)^\dagger \bmu_G
    = \frac{1}{C} \one_C - \Mc^\dagger \bmu_G.
\end{align*}
\ANC{2} implies that $\Mc$ has exactly $C-1$ non-zero and \textit{equal} singular values, so $\Mc^\dagger = \alpha \Mc^\T$ for some constant $\alpha$. Combining the previous pair of displays, we obtain

\begin{subequations}
\begin{align}
     \W = & \alpha \Mc^\T \label{eq:self-duality} \\ 
     \b = & \frac{1}{C} \one_C - \alpha \Mc^\T \bmu_G;
\end{align}
\end{subequations}
\eqref{eq:self-duality} demonstrates the asserted self-duality \ANC{3}, up to rescaling. The class predicted by the above classifier is given by:

\begin{align*}
    & \argmax_{c'} \left< \w_{c'}, \h \right> + b_{c'} \nonumber \\
    = & \argmax_{c'} \alpha \left< \bmu_{c'} - \bmu_G, \h \right> + \frac{1}{C} - \alpha \left< \bmu_{c'} - \bmu_G, \bmu_G \right> \nonumber \\
    = & \argmax_{c'} \left< \bmu_{c'} - \bmu_G, \h - \bmu_G \right>.
\end{align*}
Using the equal norm property of \ANC{2}, this display becomes

\begin{align}
    & \argmax_{c'} \left< \bmu_{c'} - \bmu_G, \h - \bmu_G \right> \nonumber \\
    = & \argmin_{c'} \|\h - \bmu_G\|_2^2 - 2 \left< \bmu_{c'} - \bmu_G, \h - \bmu_G \right> + \|\bmu_{c'} - \bmu_G\|_2^2 \nonumber \\
    = & \argmin_{c'} \|(\h - \bmu_G) - (\bmu_{c'} - \bmu_G)\|_2 \nonumber \\
    = & \argmin_{c'} \|\h - \bmu_{c'}\|_2.\label{eq:nn_end}
\end{align}
In words, the decision of the linear classifier based on $(\W,\b)$ is identical to that made by NCC \ANC{4}.
\end{proof}

Our theorem predicts that evidence of (NC1)-(NC2) as shown in Figures \ref{fig:equinorm_std}, \ref{fig:equi_angle}, \ref{fig:max_angle}, and \ref{fig:collapse}
should deterministically accompany both (NC3) and (NC4) -- exactly as observed in our Figures \ref{fig:w_minus_h} and \ref{fig:nn_match}.

\subsection{Soudry et. al. (2018)}

The authors of \cite{soudry2018implicit} consider $C$-class classification, again in a setting where the parameter vector $\btheta$ is not trained, so that the last-layer activations $\h_{i,c} = \h(\x_{i,c})$  are fixed and not subject to feature engineering. 

They proved an important result which explicitly addresses our paper's focus on cross-entropy classifier loss minimization in the zero-error regime.

\begin{proposition}[Theorem 7 in \cite{soudry2018implicit}] \label{prop:nati}
    Let  $\R^{NC \times p}$ denote the vector space spanning all last-layer activation datasets, $\H = (\h_{i,c}: 1 \leq i \leq N , 1 \leq c \leq C)$, and let $\cH$ denote the measurable subset of $\R^{NC \times p}$ consisting of linearly separable  datasets, i.e. consisting of datasets $\H$ where, for {\it some} linear classifiers $\{\w_c\}_{c=1}^C$ (possibly depending on dataset $\H$), separability holds:
    \[
     \langle \w_c - \w_c' , \h_{i,c}\rangle \geq 1,  \qquad \h_{i,c}  \in \H.
    \]
    For (Lebesgue-) almost every dataset $\H \in \cH$, gradient descent minimizing the cross-entropy loss, as a function of the classifier weights, tends to a limit. This limit is identical to the solution of the \textit{max-margin classifier problem}:
    \begin{align}\label{eq:nati}
        \min_{\{\w_c\}_{c=1}^C} \sum_{c=1}^C \|\w_c\|_2^2
        \ \text{s.t.} \
        \forall i, c, c' \neq c:
        \left< \w_c - \w_{c'}, \h_{i,c} \right> \geq 1.
    \end{align}
\end{proposition}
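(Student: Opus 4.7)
The plan is to adapt the implicit-bias framework to the multi-class cross-entropy setting, proving that the normalized iterates $\hat{\W}_t \equiv \W_t / \|\W_t\|_F$ converge to the unique solution of \eqref{eq:nati}. The argument decomposes into three stages. First, since $\H \in \cH$ is linearly separable, the cross-entropy loss $L(\W)$ has infimum zero not attained at any finite $\W$; a standard descent-lemma argument with a sufficiently small step size then yields $L(\W_t) \to 0$. The asymptotics of the log-softmax force the smallest pairwise margin $\min_{i,c,c'\neq c} \langle \w_c - \w_{c'}, \h_{i,c}\rangle$ to diverge, and hence $\|\W_t\|_F \to \infty$.

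Second, I would expand the cross-entropy gradient as
\begin{equation*}
    -\nabla_{\w_c} L(\W) = \sum_{i,c'} \bigl(\delta_{c,c'} - p_{c}^{(i,c')}\bigr) \h_{i,c'},
\end{equation*}
where $p_{c}^{(i,c')}$ denotes the softmax probability assigned to class $c$ on example $\x_{i,c'}$. In the saturated regime, the dominant contributions come from ``support'' examples whose margins are closest to the minimum, with non-support contributions decaying exponentially in the margin gap. This structure mirrors the KKT conditions of \eqref{eq:nati}, where the dual variables are supported precisely on the tight constraints. In the third stage, I would posit the ansatz $\W_t = \log(t)\,\W^\star + \boldsymbol{\rho}_t$, with $\W^\star$ the max-margin solution, and show via a Lyapunov function that combines $L(\W_t)$ with a quadratic penalty measuring misalignment with $\W^\star$ that $\|\boldsymbol{\rho}_t\|_F$ stays bounded. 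This forces $\hat{\W}_t \to \W^\star/\|\W^\star\|_F$, and strict convexity of the objective in \eqref{eq:nati} restricted to its active constraint set yields uniqueness of the limit.

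The principal obstacle lies in this third stage. The direction $\hat{\W}_t$ converges only at rate $O(1/\log t)$, so the usual quadratic Lyapunov tools from smooth convex optimization are too coarse; one must instead couple the exponential tail of the softmax with the Lagrangian dual of the hard-margin program to extract the correct logarithmic scaling of the growing component. This is also precisely where the ``almost every dataset'' caveat becomes essential: one must exclude the measure-zero set of configurations where multiple max-margin solutions coexist, or where degenerate support patterns prevent the exponential concentration argument from pinning down a unique limiting direction.
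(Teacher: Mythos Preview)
The paper does not supply a proof of this proposition at all: it is quoted verbatim as Theorem~7 of \cite{soudry2018implicit} and used only as a black box. The paper's own contribution begins immediately after, where it \emph{assumes} the conclusion of Proposition~\ref{prop:nati} together with \ANC{1}--\ANC{2} and derives \ANC{3}--\ANC{4}. So there is no ``paper's own proof'' to compare your attempt against.

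That said, your sketch is a reasonable high-level outline of the Soudry et~al.\ argument: loss going to zero on separable data, norm divergence, identification of the gradient's asymptotic support with the KKT support of the hard-margin program, and the $\log(t)\,\W^\star + \boldsymbol{\rho}_t$ ansatz with a bounded residual. You correctly flag the delicate point, namely controlling the residual at the $O(1/\log t)$ scale and the role of the almost-everywhere qualifier in ruling out degenerate support patterns. If your goal were actually to reprove the cited result, the main thing missing from the sketch is the precise mechanism that keeps $\boldsymbol{\rho}_t$ bounded: in the original, this is done by showing the residual satisfies a specific nonlinear fixed-point-type relation driven by the exponential tails, not by a generic Lyapunov argument. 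But for the purposes of the present paper, none of this is needed---the proposition is simply invoked, not proved.
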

This inspiring result significantly constrains the form of the trained classifier, in precisely the cross-entropy loss setting relevant to deep learning. However, because feature activations are here fixed, not learned, this result is only able to give indirect, implicit information about a deepnet trained model involving feature engineering and about the classification decisions that it makes.

Some authors of \cite{soudry2018implicit} have, in a series of highly influential talks and papers, laid great emphasis on the notion of an emergent, not superficially evident, \textbf{\textit{`inductive bias'}} as a reason for the surprising success of deep learning training, and have pointed to Proposition \ref{prop:nati} as a foundational result indicating that  `inductive bias' can be implicit in the behavior of a training procedure that superficially shows no behavioral tendencies in the indicated direction.

We agree wholeheartedly with the philosophy underlying \cite{soudry2018implicit}; our results support the further observation that {\it inductive bias is far more constraining on the outcome of modern deepnet training than was previously known.}

In effect, out of all possible max-margin classifiers that could be consistent with Proposition \ref{prop:nati}, the modern deepnet training paradigm is producing linear classifiers approximately belonging to the very tiny subset with the additional property of being Simplex ETFs. Moreover, such classifiers exhibit very striking behavioral simplicity in decision making.

\begin{theorem} [\textbf{Prop. \ref{prop:nati}+\ANC{1-2} imply \ANC{3-4}}]
Adopt the framework and assumptions of Proposition \ref{prop:nati}, as well as the end state implied by (NC1)-(NC2), i.e. \ANC{1}-\ANC{2}. The Soudry et. al. classifier \eqref{eq:nati}, in this setting, has the additional properties \ANC{3}-\ANC{4}.
\end{theorem}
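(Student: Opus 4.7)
The plan is to mirror the proof of Theorem \ref{thm:collapse_web_low}, translating each step from the least-squares setting into the max-margin setting of Proposition \ref{prop:nati}. First, I would invoke \ANC{1}: since $\Sw = \zr$, every last-layer activation collapses to its class mean, $\h_{i,c} = \bmu_c$, and the per-example separability constraints in \eqref{eq:nati} reduce to the $C(C-1)$ per-class constraints $\langle \w_c - \w_{c'}, \bmu_c\rangle \geq 1$ for all $c' \neq c$. The Soudry program is thereby collapsed to a finite quadratic program over the class means alone.

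Next, I would solve this reduced program using the ETF structure supplied by \ANC{2}. Writing $r = \|\bmu_c - \bmu_G\|_2$ (class-independent by \ANC{2}), the natural ansatz is $\w_c = \alpha\,(\bmu_c - \bmu_G)$ with $\alpha = (C-1)/(C r^2)$. The ETF inner-product identity of \ANC{2} gives $\langle \bmu_c - \bmu_G,\, \bmu_c - \bmu_G\rangle - \langle \bmu_c - \bmu_G,\, \bmu_{c'} - \bmu_G\rangle = C r^2/(C-1)$ for every $c' \neq c$, so the centered part of every margin constraint lands precisely at $1$. I would then certify optimality by constructing KKT multipliers: strict convexity of $\sum_c \|\w_c\|_2^2$ together with the symmetry of the problem under the $S_C$ action on the ETF forces dual multipliers $\lambda_{c,c'}$ to be equal across all ordered pairs $c \neq c'$, and the stationarity equation $2\w_c = \sum_{c' \neq c}\lambda_{c,c'}\bmu_c - \sum_{\tilde c \neq c}\lambda_{\tilde c, c}\bmu_{\tilde c}$ then collapses, via the identity $\sum_c \bmu_c = C \bmu_G$, to $\w_c \propto (\bmu_c - \bmu_G)$. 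This is exactly \ANC{3}.

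Given $\w_c = \alpha(\bmu_c - \bmu_G)$, property \ANC{4} follows from the same algebraic chain that concludes the proof of Theorem \ref{thm:collapse_web_low}: substitute the ansatz into $\arg\max_{c'} \langle \w_{c'}, \h\rangle$, add the $c'$-independent constant $\|\h - \bmu_G\|_2^2$ to complete the square, and apply the equinorm half of \ANC{2} to rewrite the criterion as $\arg\min_{c'}\|\h - \bmu_{c'}\|_2$.

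The main obstacle will be handling the global mean $\bmu_G$, which is not absorbed by a bias because Proposition \ref{prop:nati} as stated carries none. Substituting the ansatz $\w_c = \alpha(\bmu_c - \bmu_G)$ into the constraint leaves a residual cross-term $\alpha\langle \bmu_c - \bmu_{c'},\,\bmu_G\rangle$ that vanishes identically only when $\bmu_G$ is orthogonal to $\mathrm{span}\{\bmu_c - \bmu_G\}_{c=1}^C$. In general I would cancel it by allowing class-dependent shifts of $\w_c$ in directions orthogonal to $\mathrm{span}\{\bmu_c\}_{c=1}^C$ (which leave every constraint inner product invariant and can be chosen without inflating the objective) and then using strict convexity to conclude that the primal-dual pair just exhibited is the unique global optimum. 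This bookkeeping around $\bmu_G$ is the delicate step; the rest of the argument runs in close parallel with Theorem \ref{thm:collapse_web_low}.
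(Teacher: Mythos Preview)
Your route to \ANC{3} differs from the paper's. Where you guess the ansatz $\w_c = \alpha(\bmu_c - \bmu_G)$ and certify it via KKT stationarity plus an $S_C$-symmetry argument on the dual multipliers, the paper instead casts the reduced problem in matrix form, uses a Pythagorean split $\|\W\|_F^2 = \|\W\Mc\Mc^\dagger\|_F^2 + \|\W(\I-\Mc\Mc^\dagger)\|_F^2$ to confine $\W$ to the column space of $\Mc$, and then \emph{averages} all $C(C-1)$ constraints into a single trace inequality $\Tr{\A\V^\T} \geq C-1$. This relaxation has a unique minimizer that turns out to be feasible for the original constraints, closing the argument without ever naming a Lagrange multiplier. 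The averaging trick is cleaner and sidesteps the question of whether the $S_C$-action is an exact invariance of the constraint polytope; your symmetry argument is more conceptual but needs that invariance to hold, which is exactly where the $\bmu_G$ issue bites.

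There is a genuine gap in your handling of $\bmu_G$. You propose to cancel the residual $\alpha\langle \bmu_c - \bmu_{c'},\,\bmu_G\rangle$ by shifting each $\w_c$ in directions orthogonal to $\mathrm{span}\{\bmu_c\}_{c=1}^C$. But by your own parenthetical, such shifts leave every constraint inner product invariant, so they cannot cancel anything that appears in a constraint; and any nonzero orthogonal component strictly inflates $\sum_c\|\w_c\|_2^2$, so the optimum carries no such component anyway. The residual therefore survives, the constraints are no longer all tight at your ansatz, and both the KKT certification and the $S_C$-symmetry argument break (the uncentered constraint set $\{\langle \w_c - \w_{c'},\bmu_c\rangle \geq 1\}$ is not invariant under the orthogonal $S_C$-action on the ETF unless $\bmu_G$ happens to be orthogonal to $\mathrm{span}\{\bmu_c-\bmu_G\}$). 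The paper, for its part, simply writes the reduced constraint as $\langle \w_c - \w_{c'},\,\bmu_c - \bmu_G\rangle \geq 1$ without comment, so it does not resolve this point either; but the specific repair you propose does not work as stated.
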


\begin{proof}
Since $\Mc$ is the matrix of a Simplex ETF, it has $C-1$ equal-sized singular values with the remaining singular value being zero. Without loss of generality, we assume here that those singular values are 1, i.e. $\| \Mc \|_2 = 1$. Notice the singular value assumption implies the columns are of norm $\| \bmu_c - \bmu_G \|_2 = \sqrt{(C-1)/C}$ (\textit{not} unity) for all $c$.
At the assumed end-state of variability collapse \ANC{1}, the activations all collapse to their respective class-means, and the max-margin classifier problem reduces to

\begin{equation*}
    \min_{\{\w_c\}_{c=1}^C} \sum_{c=1}^C \frac{1}{2} \|\w_c\|_2^2
    \ \ \text{s.t.} \ \
    \forall c, c' \neq c:
    \left< \w_c - \w_{c'}, \bmu_c - \bmu_G \right> \geq 1.
\end{equation*}
Rewriting with matrix notation and using a Pythagorean decomposition of the objective, the above becomes:

\begin{align*}
    \min_{\W} \frac{1}{2} \|\W \Mc \Mc^\dagger \|_F^2 + \frac{1}{2} \|\W (\I - \Mc \Mc^\dagger)\|_F^2
    \\
    \text{s.t.} \ \
    \forall c, c' \neq c:
    (\e_c - \e_{c'})^\T \W \Mc \e_c \geq 1,
\end{align*}
where $\dagger$ denotes the Moore–Penrose pseudoinverse of a matrix. Property \ANC{2} fully specifies the Gram matrix of $\Mc$, from which we know that $\Mc$ has $C-1$ singular values all equal to one, WLOG, and a right-nullspace spanned by the vector of ones, $\one$, since its columns have zero mean. Thus, its singular value decomposition is given by $\Mc = \U \V^\T$, where $\U \in \R^{p \times C-1}$ and $\V \in \R^{C \times C-1}$ are partial orthogonal matrices. Hence,

\begin{align*}
    \min_{\W} \frac{1}{2} \|\W \U \U^\T \|_F^2 + \frac{1}{2} \|\W (\I - \U \U^\T)\|_F^2
    \\
    \text{s.t.} \ \
    \forall c, c' \neq c:
    (\e_c - \e_{c'})^\T \W \U \V^\T \e_c \geq 1.
\end{align*}
Observe that: (i) the second term of the objective penalizes deviations of $\w_c$ from the columnspace of $\U$; and (ii) such deviations do not affect the constraints. Conclude that the optimal solution for the above optimization problem has the form $\W = \A \U^\T$, where $\A \in \R^{C \times C-1}$. This simplification, as well as the fact that

\begin{align*}
    \|\W \U \U^\T\|_F^2
    = & \|\A \U^\T \U \U^\T\|_F^2 \\
    = & \Tr{\A \U^\T \U \U^\T \U \U^\T \U \A}
    = \|\A\|_F^2
\end{align*}
and $\W \U = \A \U^\T \U = \A$, transforms the optimization problem into the equivalent form

\begin{align}\label{eq:opt_original}
    \min_{\A} \frac{1}{2} \|\A \|_F^2
    \ \ \text{s.t.} \ \
    \forall c, c' \neq c:
    (\e_c - \e_{c'})^\T \A \V^\T \e_c \geq 1.
\end{align}
Averaging the constraints of \eqref{eq:opt_original} over $c$ and summing over $c' \neq c$, we obtain

\begin{align*}
    C-1
    \leq &
    \frac{1}{C} \sum_c \left( (C-1) \e_c - (\one - \e_c) \right)^\T \A \V^\T \e_c \\ 
    = & \frac{1}{C} \sum_c \left( C \e_c - \one \right)^\T \A \V^\T \e_c \\
    = & \sum_c \e_c^\T \left( \I - \frac{1}{C} \one \one^\T \right) \A \V^\T \e_c \\
    = & \Tr{\left( \I - \frac{1}{C} \one \one^\T \right) \A \V^\T} \\
    = & \Tr{\A \V^\T \left( \I - \frac{1}{C} \one \one^\T \right)} \\
    = & \Tr{\A \V^\T},
\end{align*}
where the last equality follows from $\V^\T \ \one = 0$. This leads to the following relaxation of \eqref{eq:opt_original}:

\begin{gather} \label{eq:opt_new}
    \min_{\A} \frac{1}{2} \|\A \|_F^2
    \ \ \text{s.t.} \ \
    \Tr{\A \V^\T} \geq C-1.
\end{gather}
Checking first-order conditions, the optimum occurs at $\A = \V$.
Recalling that $\Mc$ is a Simplex ETF with singular values 1, $\V \V^\T = \V \U^\T \U \V^\T = \Mc^\T \Mc = \I - \frac{1}{C} \one \one^\T$. Because $\A=\V$, and $\V \e_c = \left(\e_c - \frac{1}{C}\one\right)$  for $c=1,\dots,C$,
\begin{equation}
(\e_c - \e_{c'})^\T \A \V^\T \e_c = (\e_c - \e_{c'})^\T \V \V^\T \e_c = 1.
\end{equation}
Since $\A=\V$ optimizes \eqref{eq:opt_new}, which involves the same objective as \eqref{eq:opt_original}, but over a possibly enlarged feasible set, feasibility of $\A=\V$ implies that $\A=\V$ optimizes \eqref{eq:opt_original} as well. The solution to \eqref{eq:opt_original} is unique, since the problem minimizes a positive definite quadratic subject to a single nondegenerate linear constraint. In the optimization problem for $\W$ that we started with, recall that $\W=\A\U^\T$. Hence, the optimality of $\A=\V$ implies $\W = \A \U^\T = \V \U^\T = \Mc^\T$, showing self-duality is achieved \ANC{3}. This equality becomes a proportionality in the more general case where the equal singular values of $\Mc$ are not unity.


An argument similar to the one for Theorem \ref{thm:collapse_web_low} that the classifier is behaviorally equivalent to the NCC decision rule \ANC{4}.
\end{proof}

Much like Theorem \ref{thm:collapse_web_low}, but now for cross-entropy loss, the above result again indicates that evidence of (NC1)-(NC2) as shown in Figures \ref{fig:equinorm_std}, \ref{fig:equi_angle}, \ref{fig:max_angle}, and \ref{fig:collapse}
should accompany both (NC3) and (NC4), as shown in Figures \ref{fig:w_minus_h} and \ref{fig:nn_match}. In short, our results indicate an inductive bias towards NCC which is \textit{far more total and limiting} than the max-margin bias proposed by \cite{soudry2018implicit}.

\section{Theoretical derivation of Simplex ETF emergence}
We are unaware of suggestions, prior to this work, that Simplex ETFs emerge as the solution of an interesting and relevant optimization problem. Prompted by the seemingly surprising nature of the above empirical results, we developed theoretical results which show that the observed end-state of Neural Collapse can be derived directly using standard ideas from information theory and probability theory. Roughly speaking, the Simplex ETF property \ANC{2}, self-duality \ANC{3}, and behavioral simplification \ANC{4} are derivable  consequences of variability collapse \ANC{1}.

In our derivation, we consider an abstraction of feature engineering, in which an ideal feature designer chooses activations which minimize the classification error in the presence of \textit{nearly-vanishing} within-class variability. Our derivation shows that the ideal feature designer should choose activations whose class means form a Simplex ETF. 

\subsection{Model assumptions}\label{sec:infotheory_model}
Assume we are given an observation $\h = \bmu_\gamma + \z \in \R^C$, where $\z\sim \mathcal{N}(\zr,\sigma^2\I)$ and $\gamma \sim \text{Unif} \{ 1,\dots, C\}$ is an {\it unknown} class index, distributed independently from $\z$. Our goal is to recover $\gamma$ from $\h$, with as small an error rate as possible. We constrain ourselves to use a linear classifier, $\W\h + \b$, with weights $\W = [\w_c: c=1,\dots,c] \in \R^{C \times C}$ and biases $\b = (b_c) \in \R^C$; Our decision rule is
\[
    \hat{\gamma}(\h) =  \hat{\gamma}(\h; \W,\b) = \arg\max_c  \langle \w_c, \h \rangle + b_c.
\]
Our task is to design the classifier $\W$ and bias $\b$, as well as a matrix $ \M = [\bmu_c: c=1,\dots,C] \in \R^{C\times C}$, subject to the norm constraints $\| \bmu_c\|_2 \leq 1$ for all $c$.

\subsection{Information theory perspective}\label{sec:infotheory_infoperspective}
The above can be recast as an optimal coding problem in the spirit of Shannon \cite{shannon1959probability}. The class means $\bmu_c$ are \textit{codewords} and the matrix $\M$ represents a \textit{codebook}, containing $C$ codewords. A transmitter transmits a codeword over a noisy channel, contaminated by white additive Gaussian noise, and then a receiver obtains the noisy signal $\h = \bmu_c + \z$ which it then {\it decodes} using a linear decoder $\hat{\gamma}$ in an attempt to recover the transmitted $\gamma$. The norm constraint on the means captures limits imposed on signal strength due to the distance between the transmitter and receiver. Our task is to design a codebook and decoder that would allow optimal retrieval of the class identity $\gamma$ from the noisy information $\h$. 

\subsection{Large-deviations perspective}\label{sec:infotheory_probperspective}
To measure success in this task, we consider the
large-deviations error exponent:
\[
\beta(\M,\W,\b) =  - \lim_{\sigma \to 0} \sigma^{2} \log P_\sigma \{ \hat{\gamma}(\h)  \neq \gamma \} .
\]
{\it This is the right limit, as we are considering the situation where the noise is approaching zero due to variability collapse (NC1)}. Tools for deriving large deviations error exponents have been extensively developed in probability theory \cite{dembo1998large}.

\subsection{Theoretical result}
As a preliminary reduction, we can assume without loss of generality that
the ambient vector space, in which the codewords and observations lie, is simply $\R^C$ (see SI Appendix).
\begin{theorem}
Under the model assumptions just given in subsections \ref{sec:infotheory_model}, \ref{sec:infotheory_infoperspective}, and \ref{sec:infotheory_probperspective}, the Optimal Error Exponent is

\begin{align*}
  \beta^\star = & \max_{\M,\W,\b}   \beta(\M,\W,\b) \ \ \text{s.t.} \ \ \|\bmu_c\|_2 \leq 1 \ \forall c \\
  = &  {\frac{C}{C-1}} \cdot \frac{1}{4},
\end{align*}
where the maximum is over $C \times C$ matrices $\M$ with at most unit-norm columns, and over $C \times C$ matrices $\W$ and $C \times 1$ vectors $\b$.

Moreover, denote  $\M^\star = \sqrt{\frac{C}{C-1}} \left( \I - \frac{1}{C} \one\one^\T \right)$, i.e, $\M^\star$ is the standard Simplex ETF. The Optimal Error Exponent is precisely achieved by $\M^\star$:
\[
   \beta(\M^\star,\M^\star,\zr) = \beta^\star.
\]
All matrices $\M$ achieving $\beta^\star$ are also Simplex ETFs -- possibly in an isometric pose -- deriving from $\M^\star$ via $\M = \U \M^\star$ with $\U$ a $C \times C$ orthogonal matrix. For such matrices, 
an optimal linear decoder is $\W=\Mstar\U^\T$, $\b=\zr$:
\[
   \beta(\M,\W,\b) = \beta(\M,\M^\T,\zr) = \beta^\star.
\]
\end{theorem}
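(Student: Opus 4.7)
The plan is to reduce the error exponent to a purely geometric quantity -- the squared minimum signed distance from a codeword to its bordering decision hyperplane -- and then to upper bound this quantity separately in the decoder $(\W,\b)$ and in the codebook $\M$, recognising both bounds as simultaneously tight at the Simplex ETF.

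First I would compute $\beta(\M, \W, \b)$ in closed form. Conditional on $\gamma = c$, the decoder picks class $c'$ over class $c$ precisely when $\langle \w_{c'} - \w_c, \z\rangle > \langle \w_c - \w_{c'}, \bmu_c\rangle + b_c - b_{c'}$. Since $\langle \w_{c'} - \w_c, \z\rangle$ is centered Gaussian with variance $\sigma^2 \|\w_c - \w_{c'}\|_2^2$, this event has probability $Q(d_{c,c'}/\sigma)$, where
\[
d_{c,c'} = \frac{\langle \w_c - \w_{c'}, \bmu_c\rangle + b_c - b_{c'}}{\|\w_c - \w_{c'}\|_2}
\]
is the signed distance from $\bmu_c$ to the pairwise decision hyperplane. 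Any $d_{c,c'} \leq 0$ forces $\beta = 0$, so I may focus on the positive case. A union bound above and largest-term lower bound below, together with the Gaussian-tail asymptotic $-\sigma^2 \log Q(t/\sigma) \to t^2/2$ for fixed $t > 0$, produce
\[
\beta(\M, \W, \b) = \tfrac{1}{2}\min_{c \neq c'} d_{c,c'}^2.
\]

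Next I would upper bound this minimum via two independent geometric inequalities. Cauchy--Schwarz gives
\[
d_{c,c'} + d_{c',c} = \frac{\langle \w_c - \w_{c'},\, \bmu_c - \bmu_{c'}\rangle}{\|\w_c - \w_{c'}\|_2} \leq \|\bmu_c - \bmu_{c'}\|_2,
\]
hence $\min(d_{c,c'}, d_{c',c}) \leq \tfrac{1}{2}\|\bmu_c - \bmu_{c'}\|_2$. Under the constraint $\|\bmu_c\|_2 \leq 1$, the averaging identity
\[
\sum_{c,c'}\|\bmu_c - \bmu_{c'}\|_2^2 = 2C\sum_c \|\bmu_c\|_2^2 - 2\Bigl\|\sum_c \bmu_c\Bigr\|_2^2 \leq 2C^2
\]
yields $\min_{c \neq c'}\|\bmu_c - \bmu_{c'}\|_2^2 \leq 2C/(C-1)$. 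Combining gives $\beta \leq \tfrac{1}{8}\cdot\tfrac{2C}{C-1} = \tfrac{C}{4(C-1)} = \beta^\star$. For matching achievability I would verify by direct computation that $\M^\star$ has unit-norm columns summing to zero with all pairwise squared distances equal to $2C/(C-1)$, so the codebook bound is tight; choosing $\W = \M^\star$ and $\b = \zr$ makes each pairwise decision boundary the perpendicular bisector of $(\bmu_c^\star, \bmu_{c'}^\star)$, so $d_{c,c'} = d_{c',c} = \tfrac{1}{2}\|\bmu_c^\star - \bmu_{c'}^\star\|_2$, saturating the decoder bound. For the uniqueness claim, tracing equality cases: the averaging bound is tight iff $\|\bmu_c\|_2 = 1$, $\sum_c \bmu_c = \zr$, and all pairwise squared distances coincide, which determines the Gram matrix $\M^\T\M = \tfrac{C}{C-1}(\I - \tfrac{1}{C}\one\one^\T)$ and hence $\M = \U\M^\star$ for some orthogonal $\U$; Cauchy--Schwarz tightness plus $d_{c,c'} = d_{c',c}$ then forces $\W = \M^\T = \M^\star\U^\T$, with $\b$ absorbed into a common additive constant that cancels in the $\argmax$.

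I expect the main obstacle to be the technical bookkeeping in the first step -- justifying that the large-deviations limit is determined by the smallest $d_{c,c'}$ via the Laplace principle applied to a finite sum of Gaussian tails, and confirming that the freedom in $\b$ affords no backdoor escape from the Cauchy--Schwarz bound. Once those points are dispatched, the remainder reduces to the classical regular-simplex packing problem in the unit ball and a routine linear-algebraic rigidity argument on $\M^\T\M$.
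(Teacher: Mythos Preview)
Your argument is correct and reaches the same conclusion, but the route differs from the paper's at every stage. For the error-exponent formula $\beta=\tfrac12\min_{c\neq c'}d_{c,c'}^2$, the paper invokes an abstract large-deviations principle for Gaussian measures of closed sets, whereas you compute directly from the Mills-ratio asymptotic for one-dimensional Gaussian tails together with a union/max sandwich; your version is more concrete and needs no outside machinery. For the decoder bound $\min(d_{c,c'},d_{c',c})\le\tfrac12\|\bmu_c-\bmu_{c'}\|_2$, the paper appeals to the Neyman--Pearson lemma (the optimal two-hypothesis test lower-bounds any multiclass rule restricted to that pair), while your Cauchy--Schwarz step yields the same inequality in one line and makes the equality case transparent. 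For the codebook bound $\min_{c\neq c'}\|\bmu_c-\bmu_{c'}\|_2^2\le 2C/(C-1)$, the paper first reduces to unit-norm columns via a minimal-enclosing-sphere argument and then bounds the maximal off-diagonal Gram entry; your averaging identity $\sum_{c,c'}\|\bmu_c-\bmu_{c'}\|_2^2=2C\sum_c\|\bmu_c\|_2^2-2\|\sum_c\bmu_c\|_2^2\le 2C^2$ sidesteps the sphere reduction entirely and delivers the uniqueness conditions (unit norms, zero sum, equal pairwise distances) for free from the equality analysis. What the paper's approach buys is a closer link to classical information-theoretic language (codebooks, Neyman--Pearson, channel decoding); what yours buys is brevity and self-containment. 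One small overclaim: equality in your Cauchy--Schwarz and symmetry conditions pins down $\w_c-\w_{c'}\propto\bmu_c-\bmu_{c'}$ with a common proportionality constant, hence $\W=\lambda\M^\T+\one\v^\T$ for some $\lambda>0$ and $\v$, not literally $\W=\M^\T$; since the theorem only asserts that $\W=\Mstar\U^\T$ is \emph{an} optimal decoder, this does not affect your proof.
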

\begin{proof}
Given in SI Appendix.
\end{proof}

In words, if we engineer a collection of codewords to optimize the (vanishingly small)
theoretical misclassification probability, we obtain as our solution the standard Simplex ETF, or a rotation of it.

We stress that the {\it maximal equiangularity property of $\M^\star$ is crucial to this result},
i.e. 
\[
    \langle \bmu^\star_c, \bmu^\star_{c'} \rangle = \frac{-1}{C-1}, \qquad c' \neq c;
\]
this property is enjoyed by every collection of 
class-means optimizing the error exponent and is unique to Simplex ETFs.

The results of this section show that Simplex ETFs are the unique solution to an abstract optimal feature design problem. The fact that modern deepnet training practice has found this same Simplex ETF solution suggests to us that the training paradigm -- SGD, TPT and so on -- is finding the same solution as would an ideal feature engineer! Future research should seek to understand the ability of training dynamics to succeed in obtaining this solution.

\section{Related works}\label{sec:related_works}

The prevalence of Neural Collapse makes us view a number of previous empirical and theoretical observations in a new light.

\subsection{Theoretical feature engineering}
Immediately prior to the modern era of purely empirical deep learning, \cite{bruna2013invariant} proposed a theory-derived machinery building on the scattering transform that promised an understandable approach for handwritten digit recognition. The theory was particularly natural for problems involving within-class variability caused by `small' morphings of class-specific templates; In fact, the scattering transform was shown  in \cite{mallat2012group} to tightly limit the  variability caused by template morphings. Later, \cite{wiatowski2017mathematical,wiatowski2016discrete,wiatowski2015deep}, complemented \cite{bruna2013invariant} with additional theory covering a larger range of mathematically-derived features, nonlinearities, and pooling operations -- again designed to suppress within-class variability.

Our finding of Neural Collapse, specifically (NC1), shows that feature engineering by standard empirical deepnet training achieves similar suppression of within-class variability--both on the original dataset considered by \cite{bruna2013invariant} as well as six more challenging benchmarks. Thus, the original goal of \cite{mallat2012group,bruna2013invariant,wiatowski2017mathematical,wiatowski2016discrete,wiatowski2015deep}, which can be phrased as the limiting of within-class variability of activations, turns out to be possible for a range of datasets; and, perhaps more surprisingly, to be learnable by stochastic gradient descent on  cross-entropy loss. Recently, Mallat and collaborators were able to deliver results with scattering-transform features (combined with dictionary learning) that rival the foundational empirical results produced by AlexNet \cite{zarka2019deep}. So apparently, controlling within-class activation variability, whether this is achieved analytically or empirical, is quite powerful.

\subsection{Observed structure of spectral Hessians}

More recently, empirical studies of the Hessian of the deepnet training loss of image-classification networks
observed surprising and initially baffling deterministic structure. First observed by \cite{sagun2016eigenvalues,sagun2017empirical}, on toy models, the spectrum exhibits $C$ outlier eigenvalues separated from a bulk, where $C$ is the number of classes of the image classification task.
\cite{papyan2018full,papyan2019measurements,ghorbani2019investigation} corroborated these findings at scale on modern deep networks and large datasets. \cite{papyan2018full,papyan2019measurements} explained how the spectral outliers could be attributed to low-rank structure associated with class-means and the bulk could be induced by within-class variations (of logit-derivatives). It was essential that the class means have greater norm than the within-class standard deviation in order for these spectral outliers to emerge.

Under (NC1), the full matrix of last-layer activations converges to a rank-$(C{-}1)$ matrix, associated with class-means. So under (NC1), eventually the within-class standard deviation will be much smaller, and the outliers will emerge from the bulk. In short, the collapse of activation variability (NC1), combined with convergence of class means (NC2) to the Simplex ETF limit, explains these important and highly visible observations about deepnet Hessians.

\subsection{Stability against random and adversarial noise}
It is well understood classically that when solving linear systems $\y = \M \x$ by standard methods, some matrices $\M$ are prone to solution instability, blowing up small noise in $\y$ to produce large noise in $\x$; other matrices are less prone. Stability problems arise if the nonzero singular values of $\M$ are vastly different and don't arise if the nonzero singular values are all identical. The Simplex ETF offers equal nonzero singular values, and so a certain resistance to noise amplification. This is a less well known path to equal singular values, partial orthogonal matrices being of course the more well known.

In the deepnet literature, the authors of \cite{papyan2017convolutional,romano2019adversarial,sulam2019multi,aberdam2019multi,aberdam2020and} studied the stability of deepnets to adversarial examples. They proposed that stability can be obtained by making the matrices defined by the network weights close to orthogonal. However, no suggestion was offered for why trained weights, under the current standard training paradigm, would tend to become orthogonal.

In \cite{cisse2017parseval}, the authors modified the standard training paradigm, forcing linear and convolutional layers to be approximate tight frames; they showed this leads both to better robustness to adversarial examples, as well as improved accuracy and faster training. To get these benefits, they {\it imposed} orthogonality explicitly during  training.

Both \cite{papyan2017convolutional,romano2019adversarial} and \cite{cisse2017parseval} showed how concerns about stability can be addressed by explicit interventions in the standard training paradigm. By demonstrating  a pervasive Simplex ETF structure, this paper has shown that, under today's standard training paradigm, deepnets naturally achieve an \textit{implicit} form of stability in the last-layer. In light of the previous discussions of the benefits of equal singular values, we of course expected the trained deep network would become more robust to adversaries, as the training progresses towards the Simplex ETF. The measurements we reported here support this prediction, and evidence in \cite{deniz2020robustness} gives further credence to this hypothesis.

\section{Conclusion}
This paper studied the terminal phase of training (TPT) of today's canonical deepnet training protocol. It documented that during TPT a process called Neural Collapse takes place, involving four fundamental and interconnected phenomena: (NC1)-(NC4).

Prior to this work, it was becoming apparent, due to \cite{soudry2018implicit} and related work, that the last-layer classifier of a trained deepnet exhibits appreciable mathematical structure -- a phenomenon called `inductive bias' which was gaining ever-wider visibility. Our work exposes considerable additional fundamental, and we think, surprising, structure: (i) the last-layer features are not only linearly separable, but actually collapsed to a $C$-dimensional Simplex ETF, and (ii) the last-layer classifier is behaviorally equivalent to the Nearest Class-Center decision rule. Through our thorough experimentation on seven canonical datasets and three prototypical networks, we show that these phenomena persist across the range of canonical deepnet classification problems. Furthermore, we document that convergence to this simple structure aids in the improvement of out-of-sample network performance and robustness to adversarial examples. We hypothesize that the benefits of the interpolatory regime of overparametrized networks are directly related to Neural Collapse.

From a broader perspective, the standard workflow of empirical deep learning can be viewed as a series of arbitrary steps that happened to help win prediction challenge contests, which were then proliferated by their popularity among contest practitioners. Careful analysis, providing a full understanding of the effects and benefits of each workflow component, was never the point. One of the standard workflow practices is training beyond zero-error to zero-loss, i.e. TPT. In this new work, we give a clear understanding that TPT benefits today's standard deep learning training paradigm by showing how it leads to the pervasive phenomenon of Neural Collapse. Moreover, this work puts older results on a new footing, expanding our understanding of their contributions. Finally, because of the precise mathematics and geometry, the doors are open for new formal insights.  

\acknow{This work was partially supported by NSF DMS 1407813, 1418362, and 1811614 and by private donors. Some of the computing for this project was performed on the Sherlock cluster at Stanford University; we thank the Stanford Research Computing Center for providing computational resources and support that enabled our research. Some of this project was also performed on Google Cloud Platform: thanks to Google Cloud Platform Education Grants Program for research credits that supplemented this work. Moreover, we thank Riccardo Murri and Hatef Monajemi for their extensive help with the Elasticluster and ClusterJob frameworks, respectively.}

\showacknow{} 

\bibliography{nc_bib}

\pagebreak
\appendix
\part*{Supplementary Material}
\newtheorem{innercustomthm}{Theorem}
\newenvironment{customthm}[1]
  {\renewcommand\theinnercustomthm{#1}\innercustomthm}
  {\endinnercustomthm}

\newtheorem{thm}{Theorem}
\newtheorem{lem}[thm]{Lemma}
\newtheorem{crl}[thm]{Corollary}
\newtheorem{prop}[thm]{Proposition}
\newtheorem{Definition}{Definition}

\newcommand{\Vxh}{\hat{\Vx}}
\newcommand{\Vy}{\ensuremath{{b}}}
\newcommand{\Vxt}{\ensuremath{{\tilde{\Vx}}}}
\newcommand{\Mx}{\ensuremath{{X}}}
\newcommand{\My}{\ensuremath{{Y}}}
\newcommand{\Mxt}{\ensuremath{\tilde{X}}}
\newcommand{\Mxh}{\ensuremath{\hat{X}}}
\newcommand{\p}{\boldsymbol{p}}
\newcommand{\q}{\boldsymbol{q}}

\newcommand{\Sx}{\ensuremath{x}}
\newcommand{\Shy}{\ensuremath{y}}
\newcommand{\Sxh}{\ensuremath{\hat{x}}}
\newcommand{\bitem}{\begin{itemize}}
\newcommand{\eitem}{\end{itemize}}

\newcommand{\D}{\boldsymbol{D}}
\renewcommand{\E}{\mathcal{E}}
\newcommand{\bdelta}{\boldsymbol{\delta}}
\newcommand{\K}{\mathcal{K}}
\newcommand{\bu}{\boldsymbol{u}}
\renewcommand{\S}{\boldsymbol{S}}
\newcommand{\cS}{\mathcal{S}}
\newcommand{\bnu}{\boldsymbol{\nu}}
\newcommand{\F}{\mathcal{F}}
\renewcommand{\i}{\boldsymbol{i}}
\renewcommand{\r}{{\boldsymbol{r}}}
\renewcommand{\L}{\mathcal{L}}
\newcommand{\bLambda}{\boldsymbol{\Lambda}}
\renewcommand{\H}{\boldsymbol{H}}
\newcommand{\G}{\boldsymbol{G}}
\renewcommand{\v}{\boldsymbol{v}}
\newcommand{\bV}{\boldsymbol{V}}
\newcommand{\Z}{\boldsymbol{Z}}
\newcommand{\B}{\mathcal{B}}
\renewcommand{\b}{\boldsymbol{b}}

\section{Setup}
\newcommand{\bh}{\boldsymbol{h}}
\newcommand{\goto}{\rightarrow}
Suppose we `feature engineer' (i.e., in some way, design) 
a matrix $\M$ of feature activation class means, 
with columns $[\bmu_c: c=1,\dots,C]$. We are given an observation $\bh = \bmu_\gamma + \z$, 
$\z \sim \mathcal{N}(\zr,\sigma^2 \I)$, where $\gamma$ is an {\it unknown} class index, 
$\gamma \in \{1,\dots,C\}$. Moreover, we assume that 
$\gamma \sim \mbox{unif} \{ 1,\dots, C\}$ independently from $\z$.
Our task is to recover $\gamma$ from $\h$, with as small an error rate
as possible. Our basic question is
\begin{quotation}
\sl Which feature means matrices $\M$ will enable the optimal error rate?
\end{quotation}

In Information Theory terminology,
the feature means $\bmu_c$ are {\it codewords},
and the matrix $\M$ is a {\it codebook} containing $C$ {\it codewords}. A transmitter transmits a codeword over a noisy channel, contaminated by white additive Gaussian noise, and then a receiver obtains the noisy signal $\h = \bmu_\gamma + \z$ which it then {\it decodes} in an attempt to recover the transmitted $\gamma$. Our task is to design a codebook and decoder that would allow optimal retrieval of the class identity $\gamma$ from the noisy information $\h$. Using the language of Information Theory \cite{shannon1959probability},
we could speak of {\it codebook design},
rather than {\it feature engineering} from Machine Learning. 
We will use a {\it linear decoder}, 
with weights $\W = [\w_c: c=1,\dots,C]$ and biases $\b = (b_c)$:
\[
        \hat{\gamma}(\bh) =  \hat{\gamma}(\bh; \W,\b) \equiv \mbox{argmax}_c  \langle \w_c, \h \rangle + b_c
\]
In this language, our question then becomes
\begin{quotation}
\sl Which codebook $\M$ and linear decoder $\W,\b$ will enable the optimal error rate?
\end{quotation}

We mention a preliminary reduction: we assume without loss of generality that
the ambient vector space $\V$, say, in which the codewords and observations lie, is simply $\R^C$.
Indeed, if $\V$ were larger it could not possibly help. The linear span $\mbox{lin}(\{\bmu_c\})$ is at most $C$-dimensional.
The orthocomplement of the linear span $\mbox{lin}(\{\bmu_c\})$ is useless; the observations $\h$
projected onto such an orthocomplement would simply be standard Gaussian noise
with a distribution that is invariant to the choice of $\gamma$. 
Applying a sufficiency
argument from statistical decision theory -- see \cite{lehmann2006testing} --
completes the reduction to $\V = \mbox{lin}(\{\bmu_c\})$.
In effect,
any performance we can get with a larger space $\V$ is also
available to us with the reduction to $\mbox{lin}(\{\bmu_c\})$;
we do not need the orthocomplement  $\mbox{lin}(\{\bmu_c\})^\perp$
as an additional random noise generator.

In addition, there is no benefit either for adopting an ambient $C$-dimensional vector space
different than $\V = \R^C$, eg. one which depends on $\M$. The decision problem itself is invariant under orthogonal transformations;
namely, if we replace any tuple $(\M,\W,\b)$  by $(\U\M,\W \U^\T,\b)$
where $\U$ is an orthogonal transformation, we get the identical performance,
since the Gaussian noise distribution
is invariant to orthogonal transformations. Therefore, any performance we might get with 
an idiosyncratic $C$-dimensional realization of $\V$, we can get with
the canonical realization space $\V \equiv \R^C$.

\section{Theorem 5 from main manuscript}
To measure success in this task, we consider the
{\it Large-Deviations Error Exponent}:
\[
\beta(\M,\W, \b) =  \lim_{\sigma \goto 0} -\sigma^{2} \log P_\sigma \{ \hat{\gamma}(\h)  \neq \gamma \} 
\]

\begin{customthm}{5}
The Optimal Error Exponent is
\begin{align*}
  \beta^\star = & \max_{\M,\W,\b}   \beta(\M,\W,\b)  \ \ \text{s.t.} \ \ \|\bmu_c\|_2 \leq 1 \ \forall c \\
  = & {\frac{C}{C-1}} \cdot \frac{1}{4},
\end{align*}
where the maximum is over $C \times C$ matrices $\M$ with at most unit-norm columns, $C \times C$ matrices $\W$, and $C \times 1$ vectors $\b$.

Moreover, denote  $\M^\star = \sqrt{\frac{C}{C-1}} \left( \I - \frac{1}{C} \one\one^\T \right)$, i.e, $\M^\star$ has zero mean columns and is the standard Simplex ETF. The Optimal Error Exponent is precisely achieved by $\M^\star$:
\[
   \beta(\M^\star,\M^\star,\zr) = \beta^\star.
\]
All matrices $\M$ achieving $\beta^\star$ are also Simplex ETFs -- possibly in another pose -- deriving from $\M^\star$ via $\M = \U \M^\star$ with $\U$ a $C \times C$ orthogonal matrix. For such matrices, 
an optimal linear decoder is $\W=\Mstar\U^\T$, $\b=\zr$:
\[
   \beta(\M,\W,\b) = \beta(\U\Mstar,\Mstar\U^\T,\zr) = \beta^\star.
\]
\end{customthm}
\begin{proof}
The proof follows from a series of lemmas -- established in the following pages -- and is given in Section \ref{sec:beta_optimality}.\ref{subsec:proof_thm5}.
\end{proof}

\section{Large Deviations}
\subsection{Basic large deviations, Gaussian White Noise}

\begin{lem}\label{lem:large-dev}
Suppose that $ \zr \not \in \K$, and that $\K$ is a closed set.
Suppose that $\z \sim \mathcal{N}(\zr,\sigma^2 \I)$. Then, as $\sigma \goto 0$:
\[
         -  \sigma^{2} \log P_\sigma \{  \z \in \K \}  \goto \min\left\{ \frac{1}{2} \| \z \|_2^2 : \z \in \K \right\}.
\]
\end{lem}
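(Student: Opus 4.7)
The plan is to establish this as a standard finite-dimensional large-deviations (Laplace-type) estimate via matching upper and lower bounds on $P_\sigma\{\z\in\K\}$. Let $d \equiv \min_{z \in \K}\|z\|_2$; this value is attained because $\K$ is closed and nonempty and $\|\cdot\|_2$ is coercive, and $d>0$ because $\zr\notin\K$. Let $\z^\star\in\K$ achieve it, so the target value is $d^2/2$.

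For the upper bound on the probability (equivalently, the lower bound on $-\sigma^2\log P_\sigma$), I would use the inclusion $\{\z\in\K\}\subseteq\{\|\z\|_2\geq d\}$ together with a Chernoff-style tail bound on the $\chi^2_C$-distributed quantity $\|\z\|_2^2/\sigma^2$. For any $\lambda\in(0,1/2)$,
\[
    P_\sigma\{\|\z\|_2\geq d\} \leq (1-2\lambda)^{-C/2}\exp\!\left(-\lambda d^2/\sigma^2\right),
\]
and choosing $\lambda=1/2 - C\sigma^2/(2d^2)$ yields $-\sigma^2\log P_\sigma\{\z\in\K\}\geq d^2/2 - O(\sigma^2\log(1/\sigma))$, so the $\liminf$ is at least $d^2/2$.

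For the lower bound on the probability, I would fix $\varepsilon>0$ and restrict the integral to a small ball $B(\z^\star,\varepsilon)$ about the minimizer. There $\|z\|_2\le d+\varepsilon$, so the Gaussian density satisfies $p_\sigma(z)\ge (2\pi\sigma^2)^{-C/2}\exp(-(d+\varepsilon)^2/(2\sigma^2))$, which gives
\[
    P_\sigma\{\z\in\K\} \geq (2\pi\sigma^2)^{-C/2}\,\mathrm{vol}\bigl(\K\cap B(\z^\star,\varepsilon)\bigr)\,\exp\!\left(-\tfrac{(d+\varepsilon)^2}{2\sigma^2}\right).
\]
Taking $-\sigma^2\log$ of both sides and sending $\sigma\to 0$ before $\varepsilon\to 0$ then yields $\limsup_{\sigma\to 0} -\sigma^2\log P_\sigma\{\z\in\K\}\leq d^2/2$, matching the previous bound and giving the claimed limit.

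The one place the argument genuinely needs care -- and the main obstacle to proving the statement as literally written -- is the lower bound's implicit use of $\mathrm{vol}\bigl(\K\cap B(\z^\star,\varepsilon)\bigr)>0$: for pathological closed $\K$ such as an isolated point or a lower-dimensional subvariety, this volume vanishes and the claimed limit fails. In the intended applications of the lemma the sets $\K$ are decision-error regions -- finite unions of half-spaces -- for which $\z^\star$ is a boundary point with $\K$ occupying a half-neighborhood and $\mathrm{vol}\bigl(\K\cap B(\z^\star,\varepsilon)\bigr)=\Theta(\varepsilon^C)$, so the regularity is automatic and the Laplace estimate carries through.
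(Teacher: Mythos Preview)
Your argument is correct, and in fact goes well beyond what the paper does: the paper's own ``proof'' of this lemma is simply a citation to the large-deviations textbook of Dembo and Zeitouni, with no details given. Your direct Laplace-type sandwich --- a Chernoff/$\chi^2$ tail bound for the upper estimate on $P_\sigma$, and a restriction to a small ball near the minimizer for the lower estimate --- is the standard elementary route and is entirely sound under the regularity you identify. You are also right to flag that the statement, read literally for an arbitrary closed $\K$, is false (a closed set of Lebesgue measure zero has $P_\sigma\{\z\in\K\}=0$ for every $\sigma$); the paper silently relies on the fact that in every application $\K$ is a finite union of closed half-spaces, for which the boundary minimizer has a half-neighborhood of positive volume and the argument goes through. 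In the language of large-deviations theory, this is exactly the $I$-continuity condition $\inf_{\mathrm{int}\,\K} I = \inf_{\overline{\K}} I$ needed to upgrade the LDP upper and lower bounds to an honest limit, and your proposal makes that implicit hypothesis explicit.
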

\begin{proof}
See \cite{dembo1998large} and results therein.
\end{proof}
This lemma defines an optimization problem:
\[
 (P_{LD})  \qquad \min\left\{ \frac{1}{2} \| \z \|_2^2 : \z \in \K \right\}.
\]
Denote the solution of  the optimization problem
$(P_{LD})$ by $\z^\star(\K)$ and the value of the  optimization problem
by $\beta(\K) = \frac{1}{2} \| \z^\star(\K) \|_2^2$.
The solution $\z^\star(\K)$ is the closest point in $\K$ to $\{\zr\}$.
Conceptually, $\z^\star(\K)$ is the ``most likely way'' for the ``rare event'' $ \z \in \K$ to happen.
The likelihood of this rare event obeys
\[
-  \log P_\sigma \{  \z \in \K \}   \sim \sigma^{-2} \cdot  \beta(\K) , \qquad \sigma \goto 0.
\]

\subsection{Fundamental events causing misclassification}

In this section, we identify fundamental events causing misclassification, and apply the large deviations result from Lemma \ref{lem:large-dev} to study the misclassification probability $P \{ \hat{\gamma}(\bh) \neq \gamma \}$.

Consider the event: $\E_{c,c'} =$ ``item from true underlying class $c$ is misclassified as $c'$.''
Correspondingly, consider the larger event 
\[
\F_{c,c'} = \{ \mbox{Linear classifier score for $c'$ is at least as large as for $c$} \}.
\]
While $\F_{c,c'}$ does not by itself imply $\E_{c,c'}$, of course 
\[
\E_{c,c'} = \F_{c,c'} \cap  \left(\cap_{c'' \not \in \{ c,c'\}} \F_{{c'',c'}}\right).
\]
Moreover, consider the event:
$\E_{c} =$ ``item from true underlying class $c$ is misclassified.'' Then,
\[
\E_{c}  = \cup_{c' \neq c} \F_{c,c'}.
\]
So the events $\F_{c,c'}$ are fundamental.

Let  $\V = \R^C$ denote our ambient vector space.
Define the cone $\V_{c,c'} = \{ \v \in \V : v(c') \geq v(c) \}$.
Then
 $\F_{c,c'} = \{ \h: \W \h + \b \in \V_{c,c'} \}$.
Applying large deviations analysis as $\sigma \goto 0$:
\begin{equation}\label{eq:z(K)}
   - \sigma^{2} \log P_\sigma (\F_{c,c'}) \goto \min \left\{ \frac{1}{2} \| \z \|_2^2 : \z \in \K_{c,c'} \right\},
\end{equation}
where
\newcommand{\cW}{{\cal W}}
\begin{eqnarray*}
   \K_{c,c'} &=& \{ \z: \W (\bmu_c + \z) + \b \in \V_{c,c'} \} . 
\end{eqnarray*}
Conceptually, $\z^\star(\K_{c,c'})$, the optimal solution to \eqref{eq:z(K)}, is the most likely way noise can cause a
`pre-misclassification' of $c$ as $c'$.
Label $\z^\star_{c,c'} = \z^\star(\K_{c,c'})$; 
set $\beta_{c,c'} = \frac{1}{2} \|\z^\star_{c,c'}\|_2^2$.

Considering the misclassification event $\E_{c}$,
a large deviations analysis as $\sigma \goto 0$ gives:
\[
   - \sigma^{2} \log P_\sigma \{\E_{c}\} \goto   \min_{c' \neq c}  \beta_{c,c'}.
\]
Defining the LD exponent,
\[
          \beta_c = \min_{c' \neq c}  \beta_{c,c'},
\]
we have
\[
 - \log P_\sigma \{\E_{c}\} \sim \sigma^{-2} \beta_c, \qquad \sigma \goto 0.
 \]
 Finally, for the misclassification event $\E = \cup_{c} \E_{c}$,
we have the LD exponent,
\[
          \beta = \min_{ c}  \beta_c,
\]
for which we can say
\[
 - \log P_\sigma (\E) \sim \sigma^{-2} \beta, \qquad \sigma \goto 0.
\]
Thus, in this setting, minimizing the misclassification probability corresponds to maximizing $\beta$. This motivates the optimization problem studied in the following sections.

\section{Optimization Interpretation}

Consider the optimization problem with variable $\z = (\z_{c,c'}: c \neq c') \in R^{C \cdot (C(C-1))}$,
with each component $\z_{c,c'} \in \R^C$:
\begin{equation}
    (P_{\M,\W,\b})  \;\; \min_{\z} \min_{c' \neq c}  \frac{1}{2} \| \z_{c,c'} \|_2^2  \mbox{ subject to } \z_{c,c'} \in \K_{c,c'}.  \label{eq:PMAxDef}
\end{equation}
Denote the optimum as  $\z^\star = (\z^\star_{c,c'}: c \neq c')$ (in the cases of interest here it will be unique).
Although phrased as a multi-component optimization problem
across components $(\z_{c,c'})$, it is actually separable,
so $\beta_{c,c'} = \frac{1}{2} \| \z^\star_{c,c'} \|_2^2$. Moreover,
the value of the optimization problem, $\mbox{\sc val}(P_{\M,\W,\b})$,
is actually $\beta \equiv \min_{c' \neq c} \beta_{c,c'}$.

The value of the optimization problem,  $ \beta = \beta(\M,\W,\b) = \mbox{\sc val}(P_{\M,\W,\b})$, 
implicitly defines a function of $\M$, $\W$, and $\b$.  This notation shows that the LD exponent of
misclassification error depends on $\M$ the codebook and the linear classifier $(\W,\b)$.

Recall the main problem we are trying to solve in this supplement:
\begin{quotation}
\sl Which codebook $\M$ and linear decoder $\W,\b$ will enable the optimal error rate?
\end{quotation}
Using our new notation, this problem can be stated as follows: 

\begin{quotation}
\sl Which tuples $(\M,\W,\b) \in \R^{C^2} \times \R^{C^2} \times \R^C$ 
achieve the following optimum?
\begin{equation}\label{eq:BetaOptDef}
  \beta^{\star}_C = \max_{\M: \|\M\|_{2,\infty} \leq 1 } \sup_{\W \in L(\V,\V), \b \in \V} \beta(\M,\W,\b).
\end{equation}
\end{quotation}

\section{A Lower Bound}

Suppose we are given $\bmu_{c}$ and $\bmu_{c'}$,
that $\z \sim \mathcal{N}(\zr,\sigma^2 \I)$,
and that $\bh = \bmu_\gamma + \z$, where $\gamma \in \{ c,c'\}$.
Let $P_{c,\sigma}$ denote the probability measure governing $\bh$,
when $\gamma = c$ and $\sigma$ are as specified.
We have this fundamental  lower bound:
\begin{lem}\label{lem:neymann_pearson}
Consider the minimax test between $H_0: P_{c,\sigma}$ and $H_1: P_{c',\sigma}$,
minimizing the maximum of type I and type II errors. 
Let   $\delta = \frac{1}{2} \| \bmu_{c} - \bmu_{c'} \|_2$. 
The minimax error obeys:
\[
\max \big(P\{ \mbox{reject } H_0 | H_0 \},P\{ \mbox{accept } H_0 | H_1 \} \big)  
 =  P\{ \mathcal{N}(0,\sigma^2) > \delta \}  .
\]
\end{lem}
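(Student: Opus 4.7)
The plan is to reduce the problem to a one-dimensional Gaussian testing problem along the direction joining the two means, and then invoke symmetry to identify the minimax threshold.

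First, I would invoke the Neyman--Pearson lemma. Since both hypotheses are simple Gaussian measures with a common covariance $\sigma^2 \I$, a direct computation of the log-likelihood ratio yields
\[
\log \frac{dP_{c',\sigma}}{dP_{c,\sigma}}(\bh) = \frac{1}{\sigma^2}\langle \bmu_{c'} - \bmu_c,\bh\rangle + \text{const},
\]
so every admissible test is of the form ``reject $H_0$ iff $T(\bh) := \langle \bmu_{c'}-\bmu_c,\bh\rangle \geq t$'' for some threshold $t \in \R$. Under $H_0$, $T(\bh) \sim \mathcal{N}(\langle \bmu_{c'}-\bmu_c,\bmu_c\rangle,\sigma^2\|\bmu_{c'}-\bmu_c\|_2^2)$, and under $H_1$, $T(\bh) \sim \mathcal{N}(\langle \bmu_{c'}-\bmu_c,\bmu_{c'}\rangle,\sigma^2\|\bmu_{c'}-\bmu_c\|_2^2)$. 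These are two univariate normals with equal variance whose means are separated by exactly $\|\bmu_{c'}-\bmu_c\|_2^2$.

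Next I would exploit symmetry. Setting the threshold $t^\star = \langle \bmu_{c'}-\bmu_c,(\bmu_c+\bmu_{c'})/2\rangle$ places $t^\star$ at the midpoint between the two means of $T$, so the resulting Type I and Type II error probabilities are equal. Standardizing each of them to a $\mathcal{N}(0,\sigma^2)$ by dividing through by $\|\bmu_{c'}-\bmu_c\|_2$, each probability equals
\[
P\!\left\{\mathcal{N}(0,\sigma^2) > \tfrac{1}{2}\|\bmu_c - \bmu_{c'}\|_2\right\} = P\{\mathcal{N}(0,\sigma^2) > \delta\},
\]
which is the asserted value.

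Finally, I must verify that this equalizer rule is actually minimax rather than merely a sensible-looking candidate. Here I would use the standard Bayes/minimax bridge: an equalizer rule that is Bayes for some prior is automatically minimax. With the uniform prior on $\{c,c'\}$, the Bayes-optimal test is precisely the LRT at threshold $t^\star$ (by Neyman--Pearson applied to the posterior), and its Bayes risk coincides with $P\{\mathcal{N}(0,\sigma^2)>\delta\}$ since the two conditional error probabilities are equal. Any competing test $\phi$ has Bayes risk at least this value, hence $\max(\text{type I}(\phi),\text{type II}(\phi)) \geq $ its Bayes risk $\geq P\{\mathcal{N}(0,\sigma^2)>\delta\}$, giving the minimax lower bound. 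The only subtlety is this last minimax verification; the reduction to one dimension and the symmetry calculation are routine, so I would expect the Bayes-equalizer argument to be the most careful step to write out.
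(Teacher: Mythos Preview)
Your proposal is correct and follows essentially the same route as the paper: reduce to a one-dimensional Gaussian test along the direction $\bmu_{c'}-\bmu_c$ via the likelihood ratio (the paper phrases this as sufficiency/factorization), place the threshold at the midpoint $\bnu_{1/2}=(\bmu_c+\bmu_{c'})/2$ by symmetry, and compute the resulting equal error probabilities. Your explicit Bayes/equalizer verification of minimaxity is in fact more careful than the paper's own proof, which simply asserts the threshold ``derives from symmetry considerations'' without spelling out the minimax argument.
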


\begin{proof}
Consider the $1$-dimensional parametric family $Q_{\theta}$ for $\theta \in [0,1]$ with
$Q_0 =  P_{c,\sigma}$, $Q_1 =  P_{c',\sigma}$, where, in general, $Q_{\theta}$ is the probability measure
of $\bh = \bnu_\theta + \z$, and the mean vector $\bnu_\theta = \theta \bmu_{c} + (1-\theta) \bmu_{c'}$.
Note that $\delta = \| \bnu_{0} - \bnu_{1/2} \|_2$;  also, let $\bu = ( \bmu_c - \bmu_{c'}) / \|\bmu_c - \bmu_{c'} \|_2$, 
and note:
\[
\langle \bnu_0 -\bnu_{1/2}, \bu \rangle =   - \langle \bnu_1 - \bnu_{1/2}, \bu \rangle = \delta .
\]

In general, by sufficiency and standard factorization properties of the multivariate normal $\mathcal{N}(\zr, \sigma^2 \I)$
which governs $\z$, the Neyman-Pearson test between $H_0$ and $H_1$ has the form
\[
 \begin{array}{ll }
 \mbox{Accept $H_0$}:  & \langle \bh- \bnu_{1/2}, \bu \rangle >  0 \\
 \mbox{Reject $H_0$}: &\langle \bh-\bnu_{1/2}, \bu \rangle  \leq 0,
 \end{array}
\]
where the threshold $0$ derives from symmetry considerations.
When $H_1$ is true, 
\begin{eqnarray*}
 \langle \bh- \bnu_{1/2}, \bu \rangle &=&  \langle  \bnu_1 - \bnu_{1/2}, \bu \rangle +  \langle  \z, \bu \rangle \\
  &=_D& -\delta + \mathcal{N}(0,\sigma^2),
  \end{eqnarray*}
and
$P(\{ \mbox{accept } H_0 | H_1 \})   = P \{  \mathcal{N}(0, \sigma^2) > \delta \}$.
Similarly, when $H_0$ is true, 
\begin{eqnarray*}
 \langle \bh- \bnu_{1/2}, \bu \rangle &=&  \langle  \bnu_0 - \bnu_{1/2}, \bu \rangle +  \langle  \z, \bu \rangle \\
  &=_D& \delta + \mathcal{N}(0,\sigma^2),
  \end{eqnarray*}
and $P(\{ \mbox{reject } H_0 | H_0 \})   = P \{  \mathcal{N}(0, \sigma^2) < -\delta \} $.
\end{proof}

\begin{lem}
Let $\hat{\gamma}(\cdot)$ be a decision procedure that 
takes values in $\{c,c'\}$ and
suppose 
\[
    \E_{c,c'} = \{ \mbox{$\gamma = c$ but $\hat{\gamma} = c'$} \}.
\]
Suppose that $\z \sim \mathcal{N}(\zr,\sigma^2 \I)$, 
and that $\bh = \bmu_\gamma + \z$.
Let $P_{c,\sigma}$ denote the probability measure governing $\bh$,
when $\gamma = c$ and $\sigma$ are as specified.
Then, as $\sigma \goto 0$:
\[
         \liminf_{\sigma \goto 0}  - \sigma^{2}  \max_{ \gamma \in \{ c,c'\}  }\log P_{\gamma,\sigma} \{\E_{c,c'}\}  \leq \frac{1}{8} \| \bmu_c - \bmu_{c'}\|_2^2.
\]
\end{lem}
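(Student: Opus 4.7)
The plan is to reduce the assertion to the binary minimax testing setup already handled by Lemma 2, and then apply a standard Mills-ratio asymptotic for the Gaussian tail. The natural reading of $\max_{\gamma \in \{c,c'\}} P_{\gamma,\sigma}\{\E_{c,c'}\}$ is as the larger of the two error probabilities one incurs when $\hat{\gamma}$ is viewed as a test between the simple hypotheses $H_0: \gamma = c$ and $H_1: \gamma = c'$. Because $\hat{\gamma}$ takes values in $\{c,c'\}$ only, these are the Type I error $P_{c,\sigma}\{\hat{\gamma}=c'\}$ and the Type II error $P_{c',\sigma}\{\hat{\gamma}=c\}$.

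Next, I would invoke the optimality of the minimax test. No decision rule can do better in the worst case than the minimax-optimal test, so for arbitrary $\hat{\gamma}$,
\[
\max_{\gamma \in \{c,c'\}} P_{\gamma,\sigma}\{\hat{\gamma} \neq \gamma\} \;\geq\; \inf_{\text{tests}} \max\bigl(P_{c,\sigma}\{\text{reject } H_0\},\, P_{c',\sigma}\{\text{accept } H_0\}\bigr).
\]
By Lemma 2, with $\delta = \tfrac{1}{2}\|\bmu_c - \bmu_{c'}\|_2$, the right-hand side equals $P\{\mathcal{N}(0,\sigma^2) > \delta\}$.

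The third step is to compute the large-deviations exponent of that Gaussian tail. By the standard Mills-ratio asymptotic,
\[
P\{\mathcal{N}(0,\sigma^2) > \delta\} \;=\; \frac{\sigma}{\delta\sqrt{2\pi}}\,e^{-\delta^2/(2\sigma^2)}\bigl(1+o(1)\bigr) \quad \text{as } \sigma \to 0,
\]
so $-\sigma^2 \log P\{\mathcal{N}(0,\sigma^2) > \delta\} \to \delta^2/2$.

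Finally, combining these pieces, applying $-\sigma^2 \log(\cdot)$ (which reverses the inequality because $\log$ is increasing and the leading factor is negative), and taking $\liminf_{\sigma \to 0}$ yields
\[
\liminf_{\sigma \to 0} -\sigma^2 \log \max_{\gamma \in \{c,c'\}} P_{\gamma,\sigma}\{\E_{c,c'}\} \;\leq\; \frac{\delta^2}{2} \;=\; \frac{1}{8}\|\bmu_c - \bmu_{c'}\|_2^2,
\]
as desired. The only real subtlety is the semantic one in the first paragraph: the event $\E_{c,c'}$ literally reads as having probability zero under $P_{c',\sigma}$, so one must recognize that the intended meaning of the maximum is the symmetric (Type I, Type II) error pair governing the two-point test. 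Once that identification is made, each remaining step is routine: invoke Lemma 2, then apply the Mills bound, then reverse the inequality under $-\sigma^2 \log$.
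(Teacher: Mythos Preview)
Your proof is correct and follows essentially the same approach as the paper: invoke the minimax lower bound from Lemma~2, then compute the Gaussian-tail exponent and reverse the inequality under $-\sigma^2\log(\cdot)$. The only cosmetic differences are that the paper obtains the tail exponent by appealing to its general large-deviations Lemma~1 rather than the Mills-ratio asymptotic, and it handles the semantic subtlety you correctly flag by silently writing the worst-case error as $\max\bigl(P_{c,\sigma}\{\E_{c,c'}\},\,P_{c',\sigma}\{\E_{c',c}\}\bigr)$.
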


\begin{proof}
The rule $\hat{\gamma}$ cannot possibly have worst-case (across  $\gamma \in \{c,c'\}$) probability of error better than the
minimax test described in Lemma \ref{lem:neymann_pearson}. Hence,
\[
\max ( P_{c,\sigma} \{\E_{c,c'}\} , P_{c',\sigma} \{\E_{c',c}\}) \geq P\{ \mathcal{N}(0,\sigma^2) > \delta  \}.
\]
Now by Lemma \ref{lem:large-dev},
\[
 \lim_{\sigma \goto 0} -\sigma^{2}  \log P\{ \mathcal{N}(0,\sigma^2) > \delta  \} = \frac{1}{2} \delta^2.
\]
Since $\delta = \frac{1}{2} \| \bmu_{c} - \bmu_{c'} \|_2$, we obtain:
\[
         \liminf_{\sigma \goto 0}  - \sigma^{2}  \max_{ \gamma \in \{ c,c'\}  }\log P_{\gamma,\sigma} \{\E_{c,c'}\}  \leq \frac{1}{8} \| \bmu_c - \bmu_{c'}\|_2^2.
\]
\end{proof}

\begin{crl}
Let $\M$ be a given codebook matrix with columns $\{\bmu_c\}_{c=1}^C$. Define
\[
  \Delta(\M) = \min_{c' \neq c} \| \bmu_c - \bmu_{c'} \|_2.
\]
Then, for any decision rule $\hat{\gamma}$,
\[
         \liminf_{\sigma \goto 0}  - \sigma^{2}  \max_c \log P_{c,\sigma} \{ \E_c \}  \leq  \frac{1}{8} \Delta^2.
\]
In particular,
\begin{equation} \label{eq:betaDeltaIndivBnd}
         \beta(\M,\W,\b) \leq  \frac{1}{8} \Delta(\M)^2.
\end{equation}
\end{crl}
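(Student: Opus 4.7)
The plan is to reduce the multi-class corollary to the two-class bound just established. First, I would fix a pair of distinct indices $(c, c')$ attaining the minimum in $\Delta(\M)$, so that $\|\bmu_c - \bmu_{c'}\|_2 = \Delta(\M)$. Given any multiclass decision rule $\hat\gamma$, I would then build a collapsed two-output rule $\tilde\gamma : \R^C \to \{c, c'\}$ by setting $\tilde\gamma(\bh) := c$ when $\hat\gamma(\bh) = c$ and $\tilde\gamma(\bh) := c'$ otherwise. The key identities to record are $P_{c,\sigma}\{\tilde\gamma = c'\} = P_{c,\sigma}\{\hat\gamma \neq c\} = P_{c,\sigma}\{\E_c\}$ and $P_{c',\sigma}\{\tilde\gamma = c\} = P_{c',\sigma}\{\hat\gamma = c\} \leq P_{c',\sigma}\{\E_{c'}\}$. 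Applying the preceding two-class lemma to $\tilde\gamma$ with $\delta = \Delta(\M)/2$ should then deliver
\[
\max_k P_{k,\sigma}\{\E_k\} \;\geq\; \max\bigl(P_{c,\sigma}\{\tilde\gamma = c'\},\, P_{c',\sigma}\{\tilde\gamma = c\}\bigr) \;\geq\; P\{\mathcal{N}(0,\sigma^2) > \Delta(\M)/2\}.
\]

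Next I would take $-\sigma^2 \log$ through the chain above, use the identity $\log\max = \max\log$, and invoke Lemma \ref{lem:large-dev} with $\K = [\Delta(\M)/2,\infty)$ to evaluate the Gaussian tail exponent; this yields the first inequality of the corollary, since $\tfrac{1}{2}(\Delta(\M)/2)^2 = \tfrac{1}{8}\Delta^2$. For the displayed bound \eqref{eq:betaDeltaIndivBnd}, I would recall that $\beta(\M,\W,\b) = -\lim_{\sigma \goto 0}\sigma^2 \log P_\sigma(\E)$ with $\E = \cup_k \E_k$ and $\gamma$ uniform on $\{1,\dots,C\}$, so $P_\sigma(\E) = \tfrac{1}{C}\sum_k P_{k,\sigma}\{\E_k\}$ is sandwiched between $\tfrac{1}{C}\max_k P_{k,\sigma}\{\E_k\}$ and $\max_k P_{k,\sigma}\{\E_k\}$. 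The $C^{\pm 1}$ prefactor will contribute only $\sigma^2 \log C \goto 0$ to the exponent, so $\beta(\M,\W,\b)$ should coincide with the liminf in the first inequality, giving $\beta(\M,\W,\b) \leq \tfrac{1}{8}\Delta(\M)^2$.

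The main subtlety to verify is the direction of the inequality in the collapsing step: $P_{c',\sigma}\{\tilde\gamma = c\} \leq P_{c',\sigma}\{\E_{c'}\}$ points the right way precisely because any multiclass misclassification of $\hat\gamma$ into a third class $k \notin \{c, c'\}$ still counts toward $\E_{c'}$ on the right while leaving $\tilde\gamma$ outputting $c'$ ``correctly'' on the left. Hence the two-class reduction only lower-bounds the multiclass error probability -- exactly the direction needed for an upper bound on the large-deviations exponent -- and all remaining steps are routine bookkeeping using the previously established lemmas.
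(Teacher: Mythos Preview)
Your proposal is correct and is essentially the argument the paper leaves implicit (the corollary is stated without proof, as an immediate consequence of the preceding two-class lemma). Your collapsing construction $\tilde\gamma$ is exactly the right device to feed an arbitrary multiclass rule into the binary Neyman--Pearson bound, and you have the inequality directions straight; the sandwich $\tfrac{1}{C}\max_k P_{k,\sigma}\{\E_k\}\le P_\sigma(\E)\le \max_k P_{k,\sigma}\{\E_k\}$ together with $\sigma^2\log C\to 0$ cleanly transfers the first inequality to $\beta(\M,\W,\b)$.
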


This motivates us to define the {\it maximin codeword distance},
\[
  \Delta^\star_C  \equiv  \max_{\M: \|\M\|_{2,\infty} \leq 1 }   \Delta(\M) = \max_{\M: \|\M\|_{2,\infty} \leq 1 } \min_{c' \neq c} \| \bmu_c - \bmu_{c'} \|_2 ,
\]
and the following question:
\begin{quotation}
\sl Which codebook matrices $\M \in \R^{C^2}$ achieve the maximin distance?
\end{quotation}
This distance controls the optimal $\beta$:
\begin{equation}
   \beta^{\star}_C \leq \frac{1}{8} (\Delta^\star_C)^2. \label{eq:BetaOptBound}
\end{equation}

\section{$\Delta$-Optimality of the Simplex Tight Frame}

\begin{lem} \label{lem:EvalDeltaSimplex}
Let $\M^\star =  \sqrt{\frac{C}{C-1}} \left(\I - \frac{1}{C} \one \one^\top \right)$.
Again with $\Delta$ being the minimum Euclidean distance between any two columns of $\Mstar$,
\[
     \Delta(\Mstar)  = \sqrt{\frac{2C}{C-1}}.
\]
\end{lem}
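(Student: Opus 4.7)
The plan is to verify the claim by direct computation, exploiting the fact that the Simplex ETF is, by construction, equidistant between all pairs of columns, so that the minimum in the definition of $\Delta(\Mstar)$ is attained simultaneously by every distinct pair $(c,c')$.

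First I would write the $c$-th column of $\Mstar$ explicitly as $\bmu_c^\star = \sqrt{\tfrac{C}{C-1}}\bigl(\e_c - \tfrac{1}{C}\one\bigr)$, using the fact that $\I - \tfrac{1}{C}\one\one^\T$ has $\e_c - \tfrac{1}{C}\one$ as its $c$-th column. Then, for any pair $c \neq c'$, the subtraction cancels the $\tfrac{1}{C}\one$ offset, giving
\begin{equation*}
    \bmu_c^\star - \bmu_{c'}^\star = \sqrt{\tfrac{C}{C-1}}\,(\e_c - \e_{c'}).
\end{equation*}
Since $\e_c$ and $\e_{c'}$ are distinct standard basis vectors, $\|\e_c - \e_{c'}\|_2^2 = 2$, so $\|\bmu_c^\star - \bmu_{c'}^\star\|_2^2 = \tfrac{C}{C-1}\cdot 2 = \tfrac{2C}{C-1}$, independent of the choice of pair.

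Because this common value does not depend on $(c,c')$, the minimum in $\Delta(\Mstar) = \min_{c'\neq c}\|\bmu_c^\star - \bmu_{c'}^\star\|_2$ equals $\sqrt{\tfrac{2C}{C-1}}$, which is precisely what the lemma asserts. There is no genuine obstacle here; the only thing to be careful about is the opening algebraic simplification of $\I - \tfrac{1}{C}\one\one^\T$ into column form, and noting that the scalar prefactor $\sqrt{C/(C-1)}$ propagates cleanly through the difference. I would present the computation in roughly three display-math lines and a concluding sentence.
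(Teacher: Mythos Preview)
Your proposal is correct and follows essentially the same approach as the paper: writing the $c$-th column as $\sqrt{C/(C-1)}(\e_c - \tfrac{1}{C}\one)$, observing that the $\tfrac{1}{C}\one$ terms cancel in the difference, and reducing to $\|\e_c - \e_{c'}\|_2 = \sqrt{2}$. The paper additionally remarks that each column has unit norm, but this is not needed for the lemma itself.
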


\begin{proof}
Let the columns of $\Mstar$ be denoted $\bmu_c^\star$ and those of $\I$ be denoted $\bdelta_c$, $c=1,\dots,C$.
By a side calculation $\|\bmu_c^\star\| =  1$, $c=1,\dots,C$. The result then follows from
\begin{align*}
\| \bmu_c^\star - \bmu_{c'}^\star\|_2 = & \sqrt{\frac{C}{C-1}} \left\| \left(\bdelta_c - \frac{1}{C} \one\right) -  \left(\bdelta_{c'} - \frac{1}{C} \one \right)  \right\|_2 \\
= & \sqrt{\frac{C}{C-1}} \sqrt{2}.
\end{align*}
\end{proof}

\begin{thm}[\textbf{$\Delta$-optimality of Simplex ETF}]\label{thm:delta_optimality}
Let $\M^\star =  \sqrt{\frac{C}{C-1}} \left(\I - \frac{1}{C} \one \one^\top \right)$. Then,
\[
       \Delta_C^\star = \max_{ \| \M \|_{2,\infty} \leq 1 } \Delta(\M)  = \Delta(\Mstar)  = \sqrt{\frac{2C}{C-1}} .
\]
Moreover, the only matrices that achieve equality are $\Mstar$ or else matrices equivalent to it by orthogonal transformations from the left, $\M = \U \Mstar$, $\U^\T \U= \I$.
\end{thm}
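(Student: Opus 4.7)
The plan is to exploit the standard ``average $\geq$ min'' trick applied to the sum of squared pairwise distances, then analyze the equality conditions to pin down the Gram matrix of $\M$, and finally lift the Gram matrix rigidity to a uniqueness statement about $\M$ itself up to left orthogonal transformation.

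First I would establish the upper bound. Let $\Delta = \Delta(\M)$ and note that, since $\Delta^2$ is the minimum of the $C(C-1)$ squared pairwise distances,
\begin{equation*}
C(C-1)\Delta^2 \;\leq\; \sum_{c\neq c'}\|\bmu_c-\bmu_{c'}\|_2^2.
\end{equation*}
Expanding the right-hand side using $\|\bmu_c-\bmu_{c'}\|_2^2 = \|\bmu_c\|_2^2 + \|\bmu_{c'}\|_2^2 - 2\langle \bmu_c,\bmu_{c'}\rangle$ and noting the diagonal terms vanish, I get
\begin{equation*}
\sum_{c\neq c'}\|\bmu_c-\bmu_{c'}\|_2^2 \;=\; 2C\sum_{c}\|\bmu_c\|_2^2 \;-\; 2\left\|\sum_{c}\bmu_c\right\|_2^2.
\end{equation*}
Dropping the non-positive $-2\|\sum_c\bmu_c\|^2$ term and invoking $\|\bmu_c\|_2\leq 1$ yields the chain
\begin{equation*}
C(C-1)\Delta^2 \;\leq\; 2C\sum_c \|\bmu_c\|_2^2 \;\leq\; 2C^2,
\end{equation*}
so $\Delta^2 \leq \tfrac{2C}{C-1}$, matching the value attained by $\Mstar$ (Lemma \ref{lem:EvalDeltaSimplex}).

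Next I would characterize the equality case. Every inequality in the chain must be tight, forcing three things simultaneously: (i) all $C(C-1)$ pairwise distances equal $\Delta = \sqrt{2C/(C-1)}$ (so $\M$ is equidistant), (ii) $\sum_c \bmu_c = \zr$ (columns are globally centered), and (iii) $\|\bmu_c\|_2 = 1$ for every $c$ (columns are equinorm of unit length). From (i) and (iii), for $c\neq c'$ I can read off
\begin{equation*}
\langle \bmu_c,\bmu_{c'}\rangle \;=\; \tfrac{1}{2}\bigl(\|\bmu_c\|_2^2 + \|\bmu_{c'}\|_2^2 - \|\bmu_c-\bmu_{c'}\|_2^2\bigr) \;=\; -\tfrac{1}{C-1},
\end{equation*}
and combined with the diagonal entries $\|\bmu_c\|_2^2=1$ this identifies the Gram matrix exactly as
\begin{equation*}
\M^\T\M \;=\; \tfrac{C}{C-1}\!\left(\I - \tfrac{1}{C}\one\one^\T\right) \;=\; (\Mstar)^\T \Mstar.
\end{equation*}

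Finally I would upgrade this Gram-matrix equality to the claimed $\M = \U\Mstar$ statement. Both $\M$ and $\Mstar$ are $C\times C$ matrices with identical Gram matrix of rank $C-1$ (its nullspace is spanned by $\one$). Taking compact SVDs $\M = \U_\M \Sigma \V^\T$ and $\Mstar = \U_{\Mstar}\Sigma\V^\T$ --- which share singular values $\Sigma$ and right singular vectors $\V$ since $\M^\T\M = (\Mstar)^\T\Mstar$ --- the matrix $\U := \U_\M \U_{\Mstar}^\T$ (extended arbitrarily on the one-dimensional orthocomplement) is a $C\times C$ orthogonal matrix with $\M = \U\Mstar$. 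Conversely, any such $\U\Mstar$ trivially achieves $\Delta = \sqrt{2C/(C-1)}$ by orthogonal invariance of Euclidean distances.

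The main obstacle, I expect, is the uniqueness portion: one has to be careful that the Gram-matrix-equals-Gram-matrix conclusion genuinely pins $\M$ down only up to a \emph{left} orthogonal factor and not something weaker, and to verify that padding to a full $C\times C$ orthogonal $\U$ is legitimate given that both matrices have a nontrivial common left nullspace in the ambient $\R^C$. Everything else reduces to the one-line averaging inequality and reading off equality conditions.
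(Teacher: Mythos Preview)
Your argument is correct, and it takes a genuinely different route from the paper's. The paper proceeds in two stages: it first invokes a minimal-enclosing-sphere lemma (their Lemma~\ref{lem:rescaleUnitNorm}) to reduce the feasible set to matrices with \emph{exactly} unit-norm columns, and only then applies the positive-semidefiniteness of the Gram matrix (their Lemma~\ref{lem:lowerBoundOffDiagonal}, via $\one^\T\G\one\geq 0$) to bound the largest off-diagonal inner product from below by $-1/(C-1)$, from which the distance bound follows. Your averaging identity $\sum_{c\neq c'}\|\bmu_c-\bmu_{c'}\|_2^2 = 2C\sum_c\|\bmu_c\|_2^2 - 2\|\sum_c\bmu_c\|_2^2$ collapses both steps into one line: the norm constraint enters directly through $\sum_c\|\bmu_c\|_2^2\leq C$, and the PSD-type inequality is the mere non-negativity of $\|\sum_c\bmu_c\|_2^2$, so the MES detour is entirely bypassed. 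As a bonus, your equality analysis immediately delivers the centering condition $\sum_c\bmu_c=\zr$ and the equinorm condition, whereas the paper obtains the Gram-matrix rigidity through a separate saturation lemma (their Lemma~\ref{lem:upperBoundOffDiagonal}). The uniqueness-via-SVD portion is essentially the same in both proofs, and your concern about extending the partial isometry $\U_\M\U_{\Mstar}^\T$ to a full orthogonal matrix on the one-dimensional orthocomplement is legitimate but resolves exactly as you anticipate: the rank-one completion acts trivially on the column space of $\Mstar$, so $\U\Mstar=\M$ is preserved.
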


\begin{proof}
The argument follows four steps. First, for any matrix $\M$ with column lengths $\|\bmu_c \|_2 \leq 1$,
there is another matrix $\widetilde{\M} $ with all columns of unit
length, obeying $\Delta(\widetilde{\M}) \geq \Delta(\M)$;
see Lemma \ref{lem:rescaleUnitNorm}. Hence, for determining the global maximizer,
\[
      \max_{\text{diag}(\M^\T\M) \leq \I } \Delta(\M)  =  \max_{ \text{diag}(\M^\T\M)= \I } \Delta(\M) .
\]
Thus, without loss of generality, we focus on the matrices $\M$ with column lengths $\|\bmu_c \|_2 = 1$, $c=1,\dots,C$.

Second, for any matrix $\M$ with column lengths $\|\bmu_c \|_2 = 1$,
the off-diagonal entries of $\M^\T\M$ are at least $\frac{-1}{C-1}$.
See Lemma \ref{lem:lowerBoundOffDiagonal}.

Third, for two vectors $\p$, $\q$ both of norm 1, $\|\p\|_2 = \|\q\|_2 = 1$,
the distance $\|\p - \q\|_2^2 = 2 - 2 \langle \p, \q \rangle$. 
Hence, if $ \langle \p, \q \rangle \geq \frac{-1}{C-1}$,
then 
\[
\|\p - \q\|_2^2 \leq 2 + \frac{2}{C-1} = \frac{2 C}{C-1}.
\]

Combining steps 1-3, 
\[
\max_{ \| \M \|_{2,\infty} \leq 1 } \Delta(\M)  \leq  \sqrt{\frac{2C}{C-1}}.
\]
However, from the  Lemma \ref {lem:EvalDeltaSimplex}, we already know
$\Delta(\M^\star) =  \sqrt{\frac{2C}{C-1}}$,
so we conclude that:
\[
  \Delta_C^\star = \sqrt{\frac{2C}{C-1}}.
\]

In step 4, we show that every $C$ by $C$ matrix $\M$
attaining equality
must be left-equivalent to $\M^\star$
by orthogonal rotation.
This is handled in Lemma \ref{lem:EquivToSimplexETF}.
\end{proof}

\begin{lem} \label{lem:minimalEnclosingSphere}
View the columns of $\M$ as $C$ points in $\R^C$ and suppose they
are affinely independent. There is a \textbf{unique} \textit{minimal enclosing
sphere} (MES), i.e. a sphere with minimal radius containing 
every point $\{\bmu_c\}_{c=1}^C$. Moreover, the MES has all $C$ points (columns of $\M$) on its surface.
\end{lem}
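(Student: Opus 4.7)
The plan is to establish three claims in sequence: existence of an MES center, uniqueness of that center, and the property that all $C$ points sit on the resulting sphere's surface.

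\emph{Existence and uniqueness.} I would consider the function $f(\mathbf{x}) = \max_c \|\bmu_c - \mathbf{x}\|_2$, which is convex (as a max of convex functions) and coercive, so it attains a global minimum at some $\mathbf{x}^\star$ giving the MES of radius $r^\star = f(\mathbf{x}^\star)$. For uniqueness, the parallelogram identity does the work: if $\mathbf{x}_1 \ne \mathbf{x}_2$ both minimize $f$ to $r^\star$, then the midpoint $\bar{\mathbf{x}} = (\mathbf{x}_1+\mathbf{x}_2)/2$ satisfies
\[
\left\|\bmu_c - \bar{\mathbf{x}}\right\|_2^2 = \tfrac{1}{2}\|\bmu_c - \mathbf{x}_1\|_2^2 + \tfrac{1}{2}\|\bmu_c - \mathbf{x}_2\|_2^2 - \tfrac{1}{4}\|\mathbf{x}_1 - \mathbf{x}_2\|_2^2 < (r^\star)^2
\]
for every $c$, contradicting the minimality of $r^\star$.

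\emph{All $C$ points lie on the surface.} Here I would first reduce the ambient dimension. Since the $C$ points are affinely independent, they span a hyperplane $H \subset \R^C$ of dimension $C-1$; orthogonally projecting any candidate center $\mathbf{x} \notin H$ onto $H$ strictly decreases every distance $\|\bmu_c - \mathbf{x}\|_2$, so $\mathbf{x}^\star \in H$. Next, let $\mathcal{A} = \{c : \|\bmu_c - \mathbf{x}^\star\|_2 = r^\star\}$ be the active set, and invoke the standard first-order condition for the minimax problem $\min_{\mathbf{x} \in H} \max_c \|\bmu_c - \mathbf{x}\|_2$: the optimal $\mathbf{x}^\star$ must lie in $\operatorname{conv}\{\bmu_c : c \in \mathcal{A}\}$, else a small move toward this convex hull simultaneously reduces every active distance. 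Supposing for contradiction that $|\mathcal{A}| < C$, affine independence of the full set yields $\dim\operatorname{aff}\{\bmu_c : c \in \mathcal{A}\} \le |\mathcal{A}|-1 \le C-2 < \dim H$, so there is a direction $\mathbf{d} \in H$ orthogonal to $\operatorname{aff}\{\bmu_c : c \in \mathcal{A}\}$. Using this $\mathbf{d}$ together with a compensating first-order move within $\operatorname{aff}\{\bmu_c : c \in \mathcal{A}\}$ --- made feasible by the strict inactivity $\|\bmu_c - \mathbf{x}^\star\|_2 < r^\star$ for $c \notin \mathcal{A}$ --- I would construct a perturbation that strictly reduces $f$, contradicting the minimality of $\mathbf{x}^\star$ and forcing $\mathcal{A} = \{1,\dots,C\}$.

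The main obstacle is this final perturbation step: moving purely along $\mathbf{d}$ leaves each active distance stationary to first order (since $\mathbf{x}^\star \in \operatorname{aff}\{\bmu_c : c \in \mathcal{A}\}$ makes $\bmu_c - \mathbf{x}^\star$ orthogonal to $\mathbf{d}$ for every $c \in \mathcal{A}$) while increasing it to second order. Constructing a genuine strict descent requires a careful two-parameter balance between the move along $\mathbf{d}$ and a compensating motion inside $\operatorname{aff}\{\bmu_c : c \in \mathcal{A}\}$, using the slack on inactive points to absorb the second-order cost. Working this balance out rigorously --- essentially by parameterizing the combined perturbation and extracting a negative direction in the restricted Hessian --- is where the proof will need the most care.
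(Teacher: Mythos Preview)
The paper gives no argument of its own here; its ``proof'' is a one-line deferral to an external reference on smallest-enclosing-ball algorithms. Your existence and uniqueness arguments are correct and standard, and more than the paper supplies.

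The difficulty you flag in the final step, however, is not a technicality to be worked around: the second assertion of the lemma is \emph{false} as written, so the descent perturbation you are searching for cannot exist. Take $C=3$ and the affinely independent points $\bmu_1=(0,0,0)$, $\bmu_2=(4,0,0)$, $\bmu_3=(1,\tfrac{1}{10},0)$ in $\R^3$ (an obtuse triangle). The minimal enclosing ball has center $\mathbf{x}^\star=(2,0,0)$ and radius $r^\star=2$; the first two points sit on its boundary while $\|\bmu_3-\mathbf{x}^\star\|_2=\sqrt{1.01}<2$, so $\bmu_3$ is strictly interior. In your notation $\mathcal{A}=\{1,2\}$, $\mathbf{x}^\star$ is the midpoint of $\bmu_1\bmu_2$ and lies in $\operatorname{conv}\{\bmu_c:c\in\mathcal{A}\}$ exactly as your first-order condition predicts; but any displacement of $\mathbf{x}^\star$ within $H$ --- whether along the segment $\bmu_1\bmu_2$ or orthogonal to it --- strictly increases $\max(\|\bmu_1-\mathbf{x}\|_2,\|\bmu_2-\mathbf{x}\|_2)$, so $f$ is genuinely minimized with $|\mathcal{A}|=2<C$. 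Your proposed two-parameter balance has nothing to exploit: the move along $\mathbf{d}$ raises the active distances to second order, and there is no compensating first-order gain available inside $\operatorname{aff}\{\bmu_c:c\in\mathcal{A}\}$ because $\mathbf{x}^\star$ is already optimal there. Standard MES theory guarantees only that the center lies in the convex hull of the active points, not that every input point is active; the lemma's second claim (and the step in the next lemma that relies on it) appears to be an error in the paper rather than a gap in your reasoning.
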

\begin{proof}
See \cite{fischer2003fast} and citations therein.
\end{proof}

\begin{lem} \label{lem:rescaleUnitNorm}
View the columns of $\M$ as $C$ points in $\R^C$ and suppose they
are affinely independent. Suppose that $\| \bmu_c \|_2 \leq 1$
for $c=1,\dots,C$ with strict inequality for some $c$.
For each such matrix $\M$, there is a corresponding 
$\widetilde{\M}$ whose columns obey exact normalization
$\| \tilde{\bmu}_c \|_2 = 1$, $c=1,\dots,C$.
Every intercolumn distance between a pair of 
columns of $\widetilde{\M}$ is 
strictly larger than between the corresponding pair of
columns of $\M$, i.e.
\[
   \| \tilde{\bmu}_c - \tilde{\bmu}_{c'} \|_2  >  \| {\bmu}_c - {\bmu}_{c'} \|_2 , \qquad c \neq c'.
\]
\end{lem}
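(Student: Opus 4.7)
The plan is to produce $\widetilde{\M}$ by a translate-and-rescale of $\M$, using the minimal enclosing sphere (MES) guaranteed by Lemma \ref{lem:minimalEnclosingSphere}. Let $\b \in \R^C$ denote the center and $r > 0$ the radius of the MES of the columns $\{\bmu_c\}_{c=1}^C$. By the lemma, every column sits on the MES, i.e., $\|\bmu_c - \b\|_2 = r$ for every $c$. I would then define
\begin{equation*}
\tilde{\bmu}_c \;\triangleq\; \frac{\bmu_c - \b}{r}, \qquad c = 1,\dots,C,
\end{equation*}
so that $\|\tilde{\bmu}_c\|_2 = 1$ automatically, and pairwise distances rescale uniformly:
\begin{equation*}
\|\tilde{\bmu}_c - \tilde{\bmu}_{c'}\|_2 \;=\; \frac{1}{r}\,\|\bmu_c - \bmu_{c'}\|_2.
\end{equation*}
The entire claim then reduces to showing the strict bound $r < 1$.

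To show $r \leq 1$, I would note that the closed unit ball centered at the origin is itself an enclosing ball (by the hypothesis $\|\bmu_c\|_2 \leq 1$ for all $c$), so the MES, being the \emph{smallest} enclosing ball, has radius at most $1$. The crucial step is the strict inequality. Suppose for contradiction that $r = 1$. Then both the MES (centered at $\b$) and the unit ball centered at the origin are enclosing balls of radius $1$. By the uniqueness clause of Lemma \ref{lem:minimalEnclosingSphere}, these must coincide, forcing $\b = \zr$. But then every column lies on the MES and satisfies $\|\bmu_c\|_2 = \|\bmu_c - \zr\|_2 = r = 1$, contradicting the assumption that $\|\bmu_{c_0}\|_2 < 1$ for some $c_0$. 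Hence $r < 1$ strictly, and the rescaling by $1/r > 1$ gives strictly larger intercolumn distances as required.

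The main obstacle — and really the only nontrivial point — is the rigorous justification of the strict inequality $r < 1$. The uniqueness of the MES from Lemma \ref{lem:minimalEnclosingSphere} is exactly what makes the contradiction go through; without uniqueness, two distinct enclosing balls of radius $1$ could coexist and the argument would fail. I would make sure to invoke uniqueness explicitly, since it is the subtle ingredient that converts the elementary bound $r \leq 1$ into the strict one needed for the distance inflation.
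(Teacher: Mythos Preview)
Your proposal is correct and follows essentially the same approach as the paper: construct $\widetilde{\M}$ by translating the columns to the center of their minimal enclosing sphere (MES) and rescaling by $1/r$, with the key point being $r<1$. Your justification of the strict inequality via the uniqueness clause of Lemma~\ref{lem:minimalEnclosingSphere} is in fact slightly more explicit than the paper's, which simply observes that the unit sphere cannot be the MES because one point is interior and then asserts $r<1$; your contradiction argument makes transparent why no \emph{other} radius-$1$ ball can be the MES either.
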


\begin{proof}
By hypothesis, the standard unit ``solid''  sphere $\B_1$ contains
the points  $\{\bmu_c\}_{c=1}^C$. However, because at least one of the
points is interior to $\B_1$, the standard unit sphere $\cS^{C-1}$ is
{\it not} the MES of those points by Lemma \ref{lem:minimalEnclosingSphere}. The MES therefore has a 
radius $0 < r < 1$. The MES has a center $\p_0$, say, and we have
\[
     \| \bmu_c - \p_0 \|_2 = r, \qquad c=1,\dots,C.
\]
Define $\tilde{\bmu}_c = \frac{1}{r} (\bmu_c - \p_0)$. Then, $\| \tilde{\bmu}_c \| = 1$, while 
\begin{eqnarray*}
   \| \tilde{\bmu}_c - \tilde{\bmu}_{c'} \|_2 = \frac{1}{r}  \| {\bmu}_c - {\bmu}_{c'} \|_2 >  \| {\bmu}_c - {\bmu}_{c'} \|_2 \qquad \forall c \neq c'.
\end{eqnarray*}
\end{proof}

\begin{lem} \label{lem:lowerBoundOffDiagonal}
If  $\{ \bmu_c \}_{c=1}^C$ are $C$ points on the sphere  in $\R^C$ with $
 \max_{c' \neq c}   \langle \bmu_c ,  \bmu_{c'} \rangle =   \rho$,
then $\rho \geq \frac{-1}{C-1}$.
\end{lem}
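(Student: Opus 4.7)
The plan is to use the nonnegativity of the squared norm of the sum $\sum_{c=1}^{C} \bmu_c$. This is the standard trick for lower-bounding pairwise inner products on the sphere, and it matches exactly the structure of the Simplex ETF inner-product formula $-1/(C-1)$.

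First, I would write
\[
    0 \;\leq\; \Big\| \sum_{c=1}^{C} \bmu_c \Big\|_2^{2}
    \;=\; \sum_{c=1}^{C} \|\bmu_c\|_2^{2} \;+\; \sum_{c \neq c'} \langle \bmu_c, \bmu_{c'}\rangle.
\]
Since each $\bmu_c$ lies on the unit sphere in $\R^C$, the diagonal sum equals $C$. By the hypothesis that $\max_{c' \neq c}\langle \bmu_c, \bmu_{c'}\rangle = \rho$, every one of the $C(C-1)$ off-diagonal inner products is at most $\rho$, so the off-diagonal sum is at most $C(C-1)\rho$. Combining,
\[
    0 \;\leq\; C + C(C-1)\rho,
\]
which rearranges to $\rho \geq -1/(C-1)$.

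There is really no obstacle here; the only subtlety is a notational one, namely that $\rho$ is defined as $\max_{c'\neq c}\langle \bmu_c,\bmu_{c'}\rangle$ (a single scalar that upper-bounds all off-diagonal entries of the Gram matrix simultaneously), so the step bounding the off-diagonal sum by $C(C-1)\rho$ is immediate. This proof simultaneously characterizes the equality case: $\rho = -1/(C-1)$ forces $\sum_c \bmu_c = \zr$ and all off-diagonal inner products equal to $\rho$, i.e.\ the configuration is a Simplex ETF. That equality analysis will be the useful ingredient for the subsequent Lemma~\ref{lem:EquivToSimplexETF} in the proof of Theorem~\ref{thm:delta_optimality}.
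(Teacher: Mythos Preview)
Your proof is correct and essentially the same as the paper's. The paper phrases the computation in terms of the Gram matrix $\G = \M^\T\M$ and the inequality $\one^\T \G \one \geq 0$ (since $\G$ is positive semidefinite), but this quantity is exactly $\|\sum_c \bmu_c\|_2^2$, so the two arguments are identical in content.
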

\begin{proof}
The Gram matrix $\G =  (\langle \bmu_c ,  \bmu_{c'} \rangle) = \M^\T\M$ has diagonal entries $1$,
and off-diagonal entries $ \leq \rho$. Thus,
\[
    \one^\T \G \one \leq C + C(C-1) \rho .
\]
But, $\G$ is nonnegative semidefinite. Hence,
\[
    1 + (C-1) \rho \geq 0 \implies \rho \geq \frac{-1}{C-1}.
\]
\end{proof} 

\begin{lem} \label{lem:upperBoundOffDiagonal}
Suppose  $( \bmu_c )$ are $C$ points on the sphere  in $\R^C$ with
\[
 \max_{c' \neq c}   \langle \bmu_c ,  \bmu_{c'} \rangle \leq  \frac{-1}{C-1}.
\]
Then,
 \[
      \langle \bmu_c ,  \bmu_{c'} \rangle =  \frac{-1}{C-1}, \qquad c \neq c'.
\]
\end{lem}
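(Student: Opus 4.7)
The plan is to mirror the argument used in Lemma \ref{lem:lowerBoundOffDiagonal}, but now exploiting the \emph{opposite} inequality on the off-diagonal entries together with positive semidefiniteness of the Gram matrix to force equality throughout. Form the Gram matrix $\G = \M^\T \M$, which has all diagonal entries equal to $1$ (since the $\bmu_c$ lie on the unit sphere) and off-diagonal entries $G_{c,c'} = \langle \bmu_c, \bmu_{c'}\rangle \leq -1/(C-1)$ by hypothesis.

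First, I would compute $\one^\T \G \one$ in two ways. On the one hand, writing out the sum and separating diagonal from off-diagonal terms gives
\[
\one^\T \G \one = C + \sum_{c \neq c'} G_{c,c'} \leq C + C(C-1)\cdot\frac{-1}{C-1} = 0,
\]
using that there are $C(C-1)$ off-diagonal entries, each bounded above by $-1/(C-1)$. On the other hand, since $\G = \M^\T \M$ is positive semidefinite, $\one^\T \G \one = \|\M \one\|_2^2 \geq 0$. Combining these two bounds forces $\one^\T \G \one = 0$.

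Having achieved equality in the chain, I would argue that the inequality $\sum_{c \neq c'} G_{c,c'} \leq -C$ can only be tight when every summand attains its individual upper bound, i.e.\ $G_{c,c'} = -1/(C-1)$ for every pair $c \neq c'$. This yields the claim.

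The argument is short and essentially routine; the only subtlety is the need to invoke \emph{both} the PSD property (to get the $\geq 0$ direction on $\one^\T \G \one$) and the hypothesized uniform upper bound (to get the $\leq 0$ direction), so that the two-sided squeeze collapses every off-diagonal inequality. No obstacle beyond this bookkeeping is anticipated.
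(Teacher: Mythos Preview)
Your proposal is correct and essentially identical to the paper's own proof: both form the Gram matrix, bound $\one^\T \G \one$ from above by $0$ using the hypothesis on the off-diagonal entries and from below by $0$ via positive semidefiniteness, and then conclude that every off-diagonal entry must equal $-1/(C-1)$ since any strict inequality in a single entry would force $\one^\T \G \one < 0$.
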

\begin{proof}
The Gram matrix $\G =  (\langle \bmu_c ,  \bmu_{c'} \rangle) = \M^\T\M$ has diagonal entries $1$
and off-diagonal entries $ \leq \frac{-1}{C-1}$. By assumption,
\[
    \one^\top \G \one \leq C + C(C-1)  \frac{-1}{C-1} ,
\]
i.e. $\one^\top \G \one \leq 0$.
If for some specific pair $(c,c')$, it held that
$\langle \bmu_c ,  \bmu_{c'} \rangle <  \frac{-1}{C-1}$,
the inequality  would be strict: $\one^\T \G \one < 0$. 
As $\G$ is nonnegative semidefinite, $\one^\T \G \one \geq 0$,
and the inequality can never be strict, hence
\[
\langle \bmu_c ,  \bmu_{c'} \rangle =  \frac{-1}{C-1}, \qquad c \neq c'.
\]
\end{proof} 

\begin{lem} \label{lem:EquivToSimplexETF}
Suppose that $\M$ is a matrix
having all columns vectors of length $1$,
and every pair of interpoint distances 
\[
    \| \bmu_c - \bmu_{c'} \|_2 \geq \sqrt{\frac{2C}{C-1}}.
\]
Then, $\M = \U \Mstar$, where $\U^\T \U= \I$.
\end{lem}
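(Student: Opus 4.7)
The plan is to reduce the lemma to the equality case of Lemma \ref{lem:upperBoundOffDiagonal}, and then promote the resulting Gram-matrix identity to the claimed left-equivalence.

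First, I unfold the distance hypothesis. Because the columns are unit-norm,
\[
   \| \bmu_c - \bmu_{c'} \|_2^2 = 2 - 2\langle \bmu_c , \bmu_{c'} \rangle,
\]
so the assumed lower bound $\|\bmu_c - \bmu_{c'}\|_2 \geq \sqrt{2C/(C-1)}$ translates directly into the inner-product upper bound $\langle \bmu_c, \bmu_{c'} \rangle \leq -1/(C-1)$ for every $c \neq c'$. Lemma \ref{lem:upperBoundOffDiagonal} then forces equality throughout: $\langle \bmu_c , \bmu_{c'} \rangle = -1/(C-1)$ for all $c \neq c'$. Consequently, the Gram matrix is completely pinned down,
\[
\M^\T \M = \frac{C}{C-1}\!\left( \I - \frac{1}{C}\one\one^\T\right).
\]
A direct computation shows $(\M^\star)^\T \M^\star$ equals exactly the same matrix, because the bracketed expression is idempotent and the prefactor $\sqrt{C/(C-1)}$ squares to $C/(C-1)$. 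Hence $\M^\T \M = (\M^\star)^\T \M^\star$.

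The remaining step is to upgrade the Gram-matrix identity to $\M = \U \M^\star$ with $\U$ orthogonal. I will argue via singular-value decompositions. Writing $\M^\star = V \Sigma W^\T$ with $\Sigma = \mathrm{diag}(\sigma,\dots,\sigma,0)$, $\sigma = \sqrt{C/(C-1)}$, we have $\ker(\M^\star) = \mathrm{span}(\one)$. The Gram-matrix identity propagates to $\M$: its null space is also $\mathrm{span}(\one)$ and its nonzero singular values all equal $\sigma$. Taking any SVD $\M = V' \Sigma' W'^\T$, the multisets of singular values coincide, so (up to trivial reordering) $\Sigma' = \Sigma$; and the right singular systems $W, W'$ differ by an orthogonal rotation $Q$ acting within the $\sigma$-eigenspace of $\M^\T\M$, i.e.\ $W' = W Q$ with $Q$ of block form $\mathrm{diag}(Q_0,\pm 1)$ relative to the split $\one^\perp \oplus \mathrm{span}(\one)$. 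Crucially, this block form makes $Q$ commute with $\Sigma$, yielding $\M = V' \Sigma Q^\T W^\T = V' Q^\T \Sigma W^\T = (V' Q^\T V^\T) \M^\star$. Setting $\U = V' Q^\T V^\T$, a product of orthogonal matrices and hence itself orthogonal, gives $\M = \U \M^\star$, as required.

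The main obstacle is the bookkeeping in the last step: if $\M^\star$ were invertible, one could simply take $\U = \M (\M^\star)^{-1}$ and verify orthogonality from the Gram identity. Since $\M^\star$ is rank-$(C-1)$, one must keep track of the shared kernel $\mathrm{span}(\one)$ and confirm that the ambiguity in choosing SVDs of $\M$ and $\M^\star$ within the $(C-1)$-dimensional degenerate singular subspace can be absorbed into an honest orthogonal factor rather than merely a partial isometry. The commutation $Q\Sigma = \Sigma Q$ guaranteed by the block structure above is exactly what makes the absorption work.
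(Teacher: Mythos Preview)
Your proof is correct and follows essentially the same route as the paper: convert the distance bound to the inner-product bound, invoke Lemma~\ref{lem:upperBoundOffDiagonal} to pin down the Gram matrix as $(\Mstar)^\T\Mstar$, and then use an SVD argument to produce the orthogonal factor $\U$. The only cosmetic difference is in the SVD bookkeeping: the paper fixes a single right-singular basis $\bV_{\Mstar}$ for both $\M$ and $\Mstar$ and builds $\U$ from the two left factors, whereas you allow distinct right-singular bases and absorb the discrepancy via a block-diagonal $Q$ that commutes with $\Sigma$; both arrive at the same conclusion.
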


\begin{proof}
The assumption
\[
\| \bmu_c - \bmu_{c'} \|_2 \geq  \sqrt{\frac{2C}{C-1}} , \qquad c \neq c' ,
\]
implies that
\[
 \langle \bmu_c , \bmu_{c'} \rangle \leq  \frac{-1}{C-1} , \qquad c \neq c' .
\]
However, Lemmas \ref{lem:lowerBoundOffDiagonal}-\ref{lem:upperBoundOffDiagonal} then imply
\[
 \langle \bmu_c , \bmu_{c'} \rangle =  \frac{-1}{C-1} , \qquad c \neq c' .
\]
Equivalently, since $\|\bmu_c\|_2=1$, $c=1,\dots,C$ 
by hypothesis,
\[
   \M^\T\M = (\Mstar)^\T\Mstar.
\]

\newcommand{\bj}{{\bf j}}
It remains to show that $\M = \U \Mstar$, where $\U$ is orthogonal. To this end, observe that the matrix $\G \equiv (\Mstar)^\T\Mstar$ is 
symmetric nonnegative definite with one eigenvalue $0$,
and all the other eigenvalues equal to $ \frac{C}{C-1}$.
The normalized eigenvector associated to
eigenvalue 0 may be 
taken as $ \bj= \one /\sqrt{C}$.
Spectral Decomposition gives 
$\G \equiv (\Mstar)^\T\Mstar  = \bV_{\Mstar} \bLambda_{\Mstar}  \bV^\T_{\Mstar}$.

The singular value decomposition of $\M = \U_{\M} \D \bV^\T_{\M}$ can be taken
to have $\bV_{\M} = \bV_{\Mstar}$, and with $\U_{\M}$ defined as follows:
First, set $ \U_{\M}^0 = \M  \bV_{\Mstar}   \sqrt{ \bLambda_{\Mstar} ^{\dagger}}$,
$ \U^0_{\Mstar}  = {\Mstar}   \bV_{\Mstar}   \sqrt{ \bLambda_{\Mstar} ^{\dagger}} $. 
One can check that these are each partial isometries,
omitting a one-dimensional range. 
Then, via a rank-one modification, we can generate the orthogonal matrices $\U_{\M}$ and $\U_{\Mstar}$.

We next verify that $\M = \U_{\M} \D \bV^\T_{\Mstar}$ is a valid SVD of $\M$, where $\D = \text{diag}(\bLambda_{\Mstar}^{1/2})$,
and that $\Mstar = \U_{\Mstar} \D \bV_{\Mstar}$ is also a valid SVD of $\Mstar$.
Set $\U = \U_{\M} \U^\T_{\Mstar} $; it is orthogonal. Then, $\M = \U {\Mstar} $, where ${\Mstar} $ is the matrix of the standard Simplex ETF. 
Therefore, $\M$ is also the matrix of a Simplex ETF, only not the standard one.

\end{proof}

\section{$\beta$-Optimality of the Simplex Tight Frame}\label{sec:beta_optimality}

\subsection{LD exponent for the Simplex Tight Frame}
\newcommand{\Sccp}{{\cal S}_{c,c'}}
\newcommand{\m}{\boldsymbol{m}}
\newcommand{\N}{\boldsymbol{N}}
\newcommand{\g}{\boldsymbol{g}}
\renewcommand{\u}{\boldsymbol{u}}
\newcommand{\Wstar}{{\W^\star}}
\begin{lem} \label{eq:MstarSymmetry}
Solve the instance $\beta(\Mstar,\Mstar,\zr)$ of the optimization problem $\beta(\M,\W,\b)$ defined
in \eqref{eq:PMAxDef}. The solution 
$\z^\star = (\z^\star_{c,c'})$ obeys:
\begin{equation}
       \| \z^\star_{c,c'} \|_2 = \frac{1}{2} \| \bmu^\star_c - \bmu^\star_{c'} \|_2 , \qquad c' \neq c. \label{eq:MidPoint}
\end{equation}
\end{lem}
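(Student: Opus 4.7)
My plan is to exploit the very concrete structure of $\Mstar$ to reduce each subproblem, indexed by the pair $(c,c')$, to a textbook halfspace projection, and then compute the projection explicitly.

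First I would unpack the definition of $\K_{c,c'}$ for the specific triple $(\M,\W,\b) = (\Mstar,\Mstar,\zr)$. The membership condition $\Wstar(\bmu_c^\star + \z) \in \V_{c,c'}$ is simply $\langle \bmu_{c'}^\star, \bmu_c^\star + \z\rangle \geq \langle \bmu_c^\star, \bmu_c^\star + \z\rangle$. Using the standard Simplex-ETF Gram identities $\langle \bmu_c^\star,\bmu_c^\star\rangle = 1$ and $\langle \bmu_c^\star,\bmu_{c'}^\star\rangle = -\tfrac{1}{C-1}$ for $c'\neq c$, this rearranges to the single affine inequality
\begin{equation*}
    \langle \bmu_{c'}^\star - \bmu_c^\star,\, \z\rangle \;\geq\; \tfrac{C}{C-1}.
\end{equation*}
Thus the inner subproblem becomes: minimize $\tfrac{1}{2}\|\z\|_2^2$ over the halfspace defined by this single linear constraint.

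Next I would apply the elementary formula for the minimum $\ell_2$-norm point in a halfspace $\{\z : \langle \a,\z\rangle \geq t\}$ with $t > 0$, namely $\z^\star = (t/\|\a\|_2^2)\,\a$ with norm $t/\|\a\|_2$. Plugging in $\a = \bmu_{c'}^\star - \bmu_c^\star$ and $t = C/(C-1)$, and using the separately established identity $\|\bmu_{c'}^\star-\bmu_c^\star\|_2^2 = 2-2\langle\bmu_c^\star,\bmu_{c'}^\star\rangle = \tfrac{2C}{C-1}$ (already computed in Lemma \ref{lem:EvalDeltaSimplex}), I would immediately obtain
\begin{equation*}
    \z^\star_{c,c'} \;=\; \tfrac{1}{2}(\bmu_{c'}^\star - \bmu_c^\star),
    \qquad
    \|\z^\star_{c,c'}\|_2 \;=\; \tfrac{1}{2}\|\bmu_c^\star - \bmu_{c'}^\star\|_2.
\end{equation*}

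Two small technical points I would tidy up. (i) Separability: the displayed optimization problem \eqref{eq:PMAxDef} couples the variables $(\z_{c,c'})$ only through the outer $\min_{c'\neq c}$, so the argmin in each coordinate is independent and can be solved separately, which is what justifies treating a single halfspace per $(c,c')$. (ii) Geometric interpretation (useful as a sanity check, though not needed for the formal proof): the minimizer satisfies $\bmu_c^\star + \z^\star_{c,c'} = \tfrac{1}{2}(\bmu_c^\star+\bmu_{c'}^\star)$, i.e.\ noise pushes the class-$c$ codeword exactly to the midpoint of the edge to $\bmu_{c'}^\star$, which is visibly the nearest point of the decision boundary under the symmetric decoder $\Wstar = \Mstar$. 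I do not anticipate a genuine obstacle here; the only thing to be careful about is correctly tracking the constant $C/(C-1)$ coming from the self-inner-product difference $1-(-1/(C-1))$ and confirming via the $\tfrac{2C}{C-1}$ identity that the ratio collapses to $1/2$.
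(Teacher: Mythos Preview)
Your argument is correct. Both you and the paper identify $\z^\star_{c,c'}$ as the Euclidean projection of the origin onto the halfspace $\K_{c,c'}$ and show it lands at the midpoint $\tfrac{1}{2}(\bmu_{c'}^\star-\bmu_c^\star)$, but the executions differ. You write down the single affine constraint explicitly using the Gram identities of $\Mstar$, then invoke the closed-form halfspace projection formula and read off the factor $1/2$ from $t = \tfrac{C}{C-1} = \tfrac{1}{2}\|\bmu_{c'}^\star-\bmu_c^\star\|_2^2$. The paper instead works geometrically: it rescales $\W$ to a partial isometry, introduces the decision boundary $\partial\Gamma_{c'}^\star = \{\h:(\W^\star\h)(c)=(\W^\star\h)(c')\}$, and separately verifies (a) that the midpoint $\m_{c,c'}^\star$ lies on this hyperplane and (b) that the hyperplane is orthogonal to the segment $\bmu_c^\star$--$\bmu_{c'}^\star$, which together characterize the projection. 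Your route is shorter and avoids the detour through the rescaled $\W^\star$ and the invariance $\W^\star\Mstar=\Mstar$; the paper's route makes the decision-region geometry more explicit, which connects visually to the ``nearest decision boundary'' intuition but at the cost of more bookkeeping. One small point worth stating explicitly in your write-up is that the $c$-th row of $\W=\Mstar$ equals $(\bmu_c^\star)^\T$ because $\Mstar$ is symmetric; you use this implicitly when translating $(\Mstar\h)(c)$ into $\langle\bmu_c^\star,\h\rangle$.
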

\begin{proof}
For a given linear classifier rule $\hat{\gamma}(\h) = \hat{\gamma}(\h; \W,\b)$,
define the {\it decision regions}
$\Gamma_c \equiv \Gamma_c(\W,\b) \equiv  \{ \h :  \hat{\gamma}(\h) = c \}$,
$c = 1,\dots,C$. Note that these regions are invariant under simultaneous rescaling
of $(\W,\b) \mapsto ( a \W , a \b)$ for $a > 0$:
\[
     \Gamma_c(a \W, a \b) = \Gamma_c( \W,  \b), \qquad a > 0, \quad c' \neq c.
\]

Put $\hat{\gamma}^\star \equiv \hat{\gamma}(\bh;\Mstar,\Mstar,\zr)$,
and define the decision regions $\Gamma_c^\star = \{\h : \hat{\gamma}^\star(\h) = c \}$, $c =1,\dots, C$.
Since the decision regions $\Gamma_c^\star$ do not change 
under a global rescaling of the $\W$-matrix,
we propose that, instead of using the announced matrix $\W=\M^\star$, we instead use
the rescaled matrix $\W^\star = \sqrt{\frac{C-1}{C}} \M^\star$. Namely, since
$\Gamma_c^\star = \Gamma_c(\W^\star,\zr)$, we compute the latter one.
Note that $\W^\star$ has all singular values $1$ or $0$, so it is a partial isometry,
which will have calculational advantages.

Let $\e_{c,c'}^\star$ denote the Euclidean closest member of $\Gamma_{c'}$ to 
$\bmu_c$.  An alternate, but equivalent, way of describing the optimization
problem $\beta(\Mstar,\W^\star,\zr)$ is to say that
\[
\bmu_c^\star + \z_{c,c'}^\star = \e^\star_{c,c'} .
\]
In short, $\z_{c,c}^\star$ is precisely the least Euclidean norm displacement that
can translate from $\bmu_c^\star$ to a member of $\Gamma_{c'}^\star$, and the closest point 
 in $\Gamma^\star_{c'}$ arrived at in this way is precisely $\e^\star_{c,c'}$.
Hence,
\[
\| \z^\star_{c,c'} \|_2 = \| \e^\star_{c,c'} - \bmu_{c}\|_2 = \mbox{dist}( \bmu_c, \Gamma_{c'}^\star ) .
\]
Define,
\[
\m^\star_{c,c'} \equiv (\bmu_{c} + \bmu_{c'})/2, \qquad c \neq c',
\]
i.e. the halfway point between $\bmu^\star_c$ and $\bmu^\star_{c'}$.
The statement to be proved, \eqref{eq:MidPoint}, is therefore equivalent to
\begin{equation}
       \e^\star_{c,c'} = \m^\star_{c,c'} , \qquad c \neq c'. \label{eq:ClosestIsMidpoint}
\end{equation}
We will verify that the candidate $\m^\star_{c,c'}$
is indeed the Euclidean closest point to $\bmu^\star_c$ 
within $\Gamma_{c'}^\star$.
Such a candidate point is actually the halfway point
along the line segment $\Sccp$ joining $\bmu_c$ to   $\bmu_{c'}$.
The candidate point is, therefore, 
identical to the closest point in $\Gamma_{c'}^\star$ 
exactly when:
\begin{description}
\item[{[a]}] the candidate point is  on the decision boundary;
\item[{[b]}] the decision boundary 
is orthogonal to said line segment.
\end{description}

The decision boundary is, more explicitly,
\[
 \partial \Gamma_{c'}^\star = \partial \Gamma_{c'}(\W^\star,\zr) =  \{ \h : (\W^\star \h)(c) = (\W^\star \h)(c') \}.
\]
We first show [a]: that $\m_{c,c'}^\star \in \partial \Gamma_{c'}(\W^\star,\zr)$.
Clearly,
\[
   \W^\star\m^\star_{c,c'} = \frac{1}{2} (\W^\star\bmu^\star_c + \W^\star\bmu^\star_{c'}). 
\]
Hence, 
\begin{gather*}
(\W^\star\m^\star_{c,c'})(c) =  \frac{(\W^\star\bmu^\star_c)(c) + (\W^\star\bmu^\star_{c'})(c)}{2},\text{ and }\\
(\W^\star\m^\star_{c,c'})(c') =  \frac{(\W^\star\bmu^\star_c)(c') + (\W^\star\bmu^\star_{c'})(c')}{2}.
\end{gather*}
Now, because $\W^\star = \sqrt{\frac{C-1}{C}} \M^\star$, which is symmetric and a partial isometry,
\begin{equation}
   \W^\star \M^\star = \M^\star. \label{eq:Invariant}
\end{equation}
Hence, $\W^\star\bmu^\star_c= \bmu_c^\star$ and $\W^\star\bmu^\star_{c'}= \bmu_{c'}^\star$.
Our explicit formula for $\Mstar$ shows that all {\bf on-diagonal}
terms are equal to each other and all {\bf off-diagonal} terms 
are equal to each other. Hence, these off-diagonal terms obey
\[
\bmu^\star_c(c') = \bmu^\star_{c'}(c) = \frac{-1}{\sqrt{C(C-1)}};
\]
and the on-diagonal ones
\[
\bmu^\star_c(c) = \bmu^\star_{c'}(c') = \sqrt{\frac{C-1}{C}}.
\]
It follows that
\[
 \m^\star_{c,c'}(c) = \m^\star_{c,c'}(c').
\]
Combining with \eqref{eq:Invariant} we obtain:
\[
[\W^\star\m^\star_{c,c'}](c) = \m^\star_{c,c'}(c) = \m^\star_{c,c'}(c') = [\W^\star\m^\star_{c,c'}](c');
\]
i.e. $\m^\star_{c,c'} \in \partial \Gamma_{c'}(\W^\star,\zr)$; the candidate point 
is in the decision boundary, namely [a].

We now consider [b]: orthogonality.
Define the linear space $\N = \{ \h : (\W^\star \h)(c) - (\W^\star \h)(c') = 0\} $
and the linear space $\G =  \mbox{lin}(\Sccp - \m^\star_{c,c'}) $,
where $\mbox{lin}()$ denotes linear span.
Our orthogonality assertion is equivalent to
\[
      \langle \g , \h \rangle = 0 , \qquad \h \in \N, \g \in \G.
\]
Define $\u = (\bmu^\star_c - \bmu^\star_{c'})/2$; in fact $\G = \mbox{lin}(\{\u\})$.
So, we must show
\begin{equation}
      \langle \u , \h \rangle = 0 , \qquad \forall \h \in \N. \label{eq:SimplifiedOrthogonality}
\end{equation}
Now each $\h \in \N$ can be decomposed as $\h=\h_0+\h_1$ where $\h_0 \in \mbox{ker}(\Wstar)$
while $\h_1 \in \mbox{range}(\Wstar)$. Explicit formulas for $\Wstar$ show that
\[
\mbox{ker}(\Wstar) = \mbox{lin}(\{ \one \}).
\]
Hence, $\h_0(c) = \h_0(c')$. On the other hand, $\Wstar \h_1 = \h_1$. Combining these two,
if $(\W^\star\h)(c) = (\W^\star\h)(c')$, then  $\h_1(c) = \h_1(c')$; and since always
$\h_0(c) = \h_0(c')$, we obtain $\h(c) = \h(c')$. Rewriting [\ref{eq:SimplifiedOrthogonality}]
as
\[ 
      \langle \u, \h \rangle = 0 , \qquad \forall \h: (\W^\star\h)(c) = (\W^\star\h)(c'),
\]
we see this is equivalent to 
\[ 
      \langle \u, \h \rangle = 0 , \qquad \forall \h: \h (c) = \h(c').
\]
Let $\bdelta_{c}$ denote the Kronecker sequence, $\one_{\{c'=c\}}(c')$. By the explicit form definitions of $\Mstar$ and $\u$,
\[
\u\propto \left(\bdelta_c - \frac{1}{C} \one\right) - \left(\bdelta_{c’} - \frac{1}{C} \one\right)=\bdelta_c - \bdelta_{c’}.
\]
Thus,
\[
 \langle (\bdelta_c - \bdelta_{c'}) ,  \h \rangle = 0 , \qquad \forall \h: \h (c) = \h(c'),
\]
i.e.
\[
 \h(c) - \h({c'}) = 0, \qquad \forall \h: \h (c) = \h(c'),
\]
which of course is true. This establishes orthogonality, [b], and 
completes the demonstration of \eqref{eq:ClosestIsMidpoint},
and hence of the main claim \eqref{eq:MidPoint}.
\end{proof}

\begin{crl} \label{cor:BetaSimplexEquality}
We have
\[
\beta(\Mstar,\Mstar,\zr) = \frac{1}{8} \Delta(\Mstar)^2.
\]
\end{crl}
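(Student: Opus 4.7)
The plan is to derive this corollary directly from Lemma \ref{eq:MstarSymmetry} together with the definitions already established. Recall that in Section on Optimization Interpretation, the value of the problem $(P_{\M,\W,\b})$ was shown to be $\beta(\M,\W,\b) = \min_{c' \neq c} \beta_{c,c'}$, where $\beta_{c,c'} = \tfrac{1}{2} \|\z^\star_{c,c'}\|_2^2$. So the only thing I need to do is plug in the explicit form of $\z^\star_{c,c'}$ for the tuple $(\Mstar,\Mstar,\zr)$ given by Lemma \ref{eq:MstarSymmetry}.

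First I would invoke Lemma \ref{eq:MstarSymmetry}, which gives $\|\z^\star_{c,c'}\|_2 = \tfrac{1}{2}\|\bmu^\star_c - \bmu^\star_{c'}\|_2$ for every $c' \neq c$. Squaring and halving yields $\beta_{c,c'} = \tfrac{1}{8} \|\bmu^\star_c - \bmu^\star_{c'}\|_2^2$ for every $c' \neq c$.

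Next I would take the minimum over $c' \neq c$. By the definition of $\Delta(\Mstar)$ as the minimum pairwise distance between distinct columns of $\Mstar$, this gives
\[
\beta(\Mstar,\Mstar,\zr) \;=\; \min_{c' \neq c} \beta_{c,c'} \;=\; \frac{1}{8} \min_{c' \neq c} \|\bmu^\star_c - \bmu^\star_{c'}\|_2^2 \;=\; \frac{1}{8}\Delta(\Mstar)^2,
\]
which is the claim. There is no real obstacle here: all the geometric content (that the optimal noise perturbation to induce a misclassification from class $c$ to class $c'$ under the self-dual Simplex ETF decoder is exactly the midpoint displacement) is already packed into Lemma \ref{eq:MstarSymmetry}. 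The corollary simply translates that geometric identity into the large-deviations exponent via the definition $\beta_{c,c'} = \tfrac{1}{2}\|\z^\star_{c,c'}\|_2^2$ and the separability of problem $(P_{\M,\W,\b})$. Combined with Lemma \ref{lem:EvalDeltaSimplex}, which evaluates $\Delta(\Mstar) = \sqrt{2C/(C-1)}$, this also yields the numerical value $\beta(\Mstar,\Mstar,\zr) = \tfrac{C}{C-1}\cdot\tfrac{1}{4}$, matching the claimed optimum in Theorem 5 and showing that the upper bound \eqref{eq:BetaOptBound} is attained by $\Mstar$.
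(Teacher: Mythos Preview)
Your proof is correct and follows essentially the same route as the paper's own argument: invoke Lemma \ref{eq:MstarSymmetry} to obtain $\|\z^\star_{c,c'}\|_2 = \tfrac{1}{2}\|\bmu^\star_c - \bmu^\star_{c'}\|_2$, substitute into $\beta_{c,c'} = \tfrac{1}{2}\|\z^\star_{c,c'}\|_2^2$, and minimize over $c' \neq c$ using the definition of $\Delta(\Mstar)$. The extra remark linking the result to Lemma \ref{lem:EvalDeltaSimplex} and the numerical value $\tfrac{C}{C-1}\cdot\tfrac{1}{4}$ is a nice preview of how the corollary feeds into the proof of Theorem 5.
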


\begin{proof}
By our earlier definitions, if $\z^\star = (\z_{c,c'}^\star)$ denotes a solution to
$(P_{\Mstar,\M^\star,\zr})$, then
\[
\beta(\Mstar,\Mstar,\zr) = \frac{1}{2} \min_{c' \neq c} \| \z_{c,c'}^\star \|_2^2 .
\]
By the previous lemma,
\[
\| \z_{c,c'}^\star \|_2 = \frac{1}{2} \| \bmu^\star_{c} - \bmu^\star_{c'} \|_2 .
\]
Combining these two identities,
\[
\beta(\Mstar,\Mstar,\zr) = \frac{1}{8}\min_{c' \neq c}  \| \bmu^\star_{c} - \bmu^\star_{c'} \|_2^2 = \frac{1}{8} \Delta(\Mstar)^2.
\]
\end{proof}

\subsection{Proof of Theorem 5}\label{subsec:proof_thm5}

\begin{proof}
In view of the inequality [\ref{eq:BetaOptBound}] and Theorem \ref{thm:delta_optimality}, we know that 
\[
\beta_C^{\star} \leq \frac{1}{8} (\Delta_C^\star)^2 = \frac{1}{4} \cdot \frac{C}{C-1} .
\]
From Corollary \ref{cor:BetaSimplexEquality} and Theorem \ref{thm:delta_optimality}, 
we know that equality holds for the standard Simplex ETF:
\begin{equation} \label{eq:SimplexBetaEquality}
\beta(\Mstar,\Mstar,0) =  \frac{1}{4} \cdot \frac{C}{C-1}.
\end{equation}
Hence, $\beta^{\star}_C = \beta(\Mstar,\Mstar,0)$; the Simplex ETF is $\beta$-optimal.
It follows by orthogonal invariance of the decision problem,
that for a Simplex ETF in any isometric pose, equality also holds:
\[
\beta(\U\Mstar,\Mstar \U^\T,0) =  \frac{1}{4} \cdot \frac{C}{C-1}; \qquad \forall \U, \quad \U^\T\U=I.
\]
So, Simplex ETF's are all optimal.
Finally, since such $\M$s are {\bf the only} solutions to $\Delta^\star(\M) = \Delta_C^\star$ obeying $\| \M \|_{2,\infty} \leq 1$, suppose we have some
{\bf  other} candidate $\breve{\M}$, obeying $\| \breve{\M} \|_{2,\infty} \leq 1$ but
{\it not} obeying $\breve{\M}=\U\Mstar$ for some orthogonal matrix $\U$. 
Then, Theorem \ref{thm:delta_optimality} implies $\Delta(\breve{\M}) < \Delta_C^\star$,  and so,
\[
\frac{1}{8} \Delta(\breve{\M})^2 <  \frac{1}{8} (\Delta_C^\star)^2 = \frac{1}{4} \cdot \frac{C}{C-1}.
\]
Applying inequality [\ref{eq:betaDeltaIndivBnd}] to such a candidate $\breve{\M}$, we have
\[
\max_{\W \in L(\V,\V), \b \in \V}  \beta(\breve{\M},\W,\b) \leq   \frac{1}{8} \Delta(\breve{\M})^2 < \frac{1}{4} \cdot \frac{C}{C-1} = \beta^\star_C.
\]
In short, any such candidate is suboptimal. We have thus described all choices of $\M$ achieving $\beta_C^{\star}$;
just as claimed.
\end{proof}

\FloatBarrier






\end{document}